\newcommand{\squishlist}{
	\begin{list}{$\bullet$}
		{ \setlength{\itemsep}{2pt}    \setlength{\parsep}{0pt}
			\setlength{\topsep}{5pt}     \setlength{\partopsep}{0pt}
			\setlength{\leftmargin}{1.35em} \setlength{\labelwidth}{1em}
			\setlength{\labelsep}{0.5em} } }
	\newcommand{\squishend}{
\end{list}  }
\renewcommand{\fnum@figure}{\textbf{Fig.~\thefigure}}
\DeclareMathOperator{\E}{\mathbb{E}}
\newcommand*\diff{\mathop{}\!\mathrm{d}} 
\newtheorem{theorem}{Theorem}
\newtheorem{corollary}{Corollary}
\newtheorem{lemma}{Lemma}
\newtheorem{proposition}{Proposition}
\newtheorem{definition}{Definition}
\definecolor{darkgreen}{rgb}{0,0.5,0}
\definecolor{purple}{rgb}{1,0,1}
\newcommand{\kibitz}[2]{\ifnum\Comments=1\textcolor{#1}{#2}\fi}
\begin{document}
	\begin{textblock*}{20cm}(1cm,1cm)
		\textcolor{red}{Preprint accepted by Nature Communications Engieering \url{https://www.nature.com/commseng/}}
	\end{textblock*}
	
	\title{Bayesian Learning for the Robust Verification of Autonomous Robots}

	\author{Xingyu Zhao\thanks{Warwick Manufacturing Group, University of Warwick, Coventry, UK.} ,
		Simos Gerasimou\thanks{Department of Computer Science, University of York, York, UK.} , 
		Radu Calinescu\thanks{Department of Computer Science and Assuring Autonomy International Programme, University of York, York, UK.} ,
		Calum Imrie\footnotemark[3] ,
		Valentin Robu\thanks{Intelligent and Autonomous Systems Group, Centrum Wiskunde \& Informatica, Amsterdam, NL.},
		\thanks{Electrical Engineering Department, Eindhoven University of Technology, Eindhoven, NL}
		and 
		David Flynn\thanks{James Watt School of Engineering, University of Glasgow, Glasgow, UK}
	}
	\date{}
	
	\twocolumn[
	\begin{@twocolumnfalse}
		\maketitle
		\begin{abstract}
			
			\textbf{
				\noindent
				Abstract -- Autonomous robots used in infrastructure inspection, space exploration and other critical missions operate in highly dynamic environments. As such, they must continually verify their ability to complete the tasks associated with these missions safely and effectively. Here we present a Bayesian learning framework that enables this runtime verification of autonomous robots. The framework uses prior knowledge and observations of the verified robot to learn expected ranges for the occurrence rates of regular and singular (e.g., catastrophic failure) events. Interval continuous-time Markov models defined using these ranges are then analysed to obtain expected intervals of variation for system properties such as mission duration and success probability. We apply the framework to an autonomous robotic mission for underwater infrastructure inspection and repair. The formal proofs and experiments presented in the paper show that our framework produces results that reflect the uncertainty intrinsic to many real-world systems, enabling the robust verification of their quantitative properties under parametric uncertainty. 
			}
		\end{abstract}
		
		\vspace*{3mm}
	\end{@twocolumnfalse}
	]
	\saythanks
	
	\noindent
	
	\section{Introduction}
	
	Mobile robots are increasingly used to perform critical missions in extreme environments, which are 
	inaccessible or hazardous to humans.~\cite{japan-strategy-2015,EU-Robotics-2020,UK-RAS-2014,christensen2021roadmap}
	These missions range from the inspection 
	and 
	maintenance of offshore wind-turbine mooring chains and high-voltage cables to nuclear reactor repair and deep-space exploration.~\cite{richardson2017robotic,UK-RAS-space-2018}
	
	Using robots for such missions poses major challenges.~\cite{lane_new_2016,EU-Robotics-2020} First and foremost, the robots need to operate with high levels of autonomy, as in these harsh environments their interaction and communication with human operators is severely restricted. Additionally, they frequently need to make complex mission-critical decisions, with errors endangering not just the robot---itself an expensive asset, but also the important system or environment being inspected, repaired or explored. Last but not least, they need to cope with the considerable uncertainty associated with these missions, which often comprise one-off tasks or are carried out in settings not encountered before. 
	
	Addressing these major challenges is the focus of intense research worldwide. In the UK alone, a recent \pounds 44.5M research programme has 
	tackled technical and certification challenges associated with the use of robotics and AI in the extreme environments encountered in offshore energy (\url{https://orcahub.org}), space exploration (\url{https://www.fairspacehub.org}), nuclear infrastructure (\url{https://rainhub.org.uk}), and management of nuclear waste (\url{https://www.ncnr.org.uk}). This research has initiated a step change in the assurance and certification of autonomous robots---not least through the emergence of new concepts such as dynamic assurance~\cite{calinescu2017engineering} and self-certification~\cite{robu_train_2018} for robotic systems. 
	
	Dynamic assurance requires a robot to respond to failures, environmental changes and other disruptions not only by reconfiguring accordingly,~\cite{calinescu2012self} but also by producing new assurance evidence which guarantees that the reconfigured robot will continue to achieve its mission goals.~\cite{calinescu2017engineering} Self-certifying robots must continually verify their health and ability to complete missions in dynamic, risk-prone environments.~\cite{robu_train_2018} In line with the ``defence in depth'' safety engineering paradigm,~\cite{INSAG-10} this runtime verification has to be performed independently of the front-end planning and control engine of the robot. 
	
	Despite these advances, 
	current dynamic assurance and self-certi\-fi\-ca\-tion methods rely on quantitative verification techniques (e.g., probabilistic~\cite{Kwi07,katoen_probabilistic_2016} and statistical~\cite{legay_statistical_2010} model checking) that do not handle well the parametric uncertainty that autonomous robots encounter in extreme environments. Indeed, quantitative verification  operates with stochastic models that demand single-point estimates of uncertain parameters such as task execution and failure rates. These estimates capture neither epistemic nor aleatory parametric uncertainty. As such, they are affected by arbitrary estimation errors which---because stochastic models are often nonlinear---can be amplified in the verification process~\cite{calinescu_2016_formal}, and may lead to invalid robot reconfiguration decisions, dynamic assurance and self-certification.
	
	In this paper, we present a robust quantitative verification framework that employs Bayesian learning techniques to overcome this limitation. Our framework requires only partial and limited prior knowledge about the verified robotic system, and exploits its runtime observations (or lack thereof) to learn ranges of values for the system parameters. These parameter ranges are then used to compute  the quantitative properties that underpin the robot's decision making (e.g., probability of mission success, and expected energy usage) as intervals that---unique to our framework---capture the parametric uncertainty of the mission. 
	Our framework is underpinned by probabilistic model checking, a technique that is broadly used to assess quantitative properties, e.g., reliability, performance and energy cost of systems exhibiting stochastic behaviour. Such systems include 
	autonomous robots from numerous domains{\cite{kwiatkowska_probabilistic_2022}}, e.g., mobile service robots{\cite{lacerda2019probabilistic}}, spacecraft{\cite{nardone2016probabilistic}}, drones{\cite{fraser2020collaborative}} and robotic swarms{\cite{liu2010modeling}}. While we present a case study involving an autonomous underwater vehicle (AUV), the generalisability of our approach stems from the broad adoption of probabilistic model checking for the modelling and verification of this wide range of autonomous robots. As such, we anticipate that our results are applicable to autonomous agents across all these domains.
	
	We start by introducing our robust verification framework, which comprises Bayesian techniques for learning the occurrence rates of both singular events (e.g., catastrophic failures and completion of one-off tasks) and events observed regularly during system operation. Next, we describe the use of the framework for an offshore wind turbine inspection and maintenance robotic mission. Finally, we discuss the framework in the context of related work, and we suggest directions for further research.
	
	\section{Robust Bayesian verification framework}
	
	\subsection{Quantitative verification}
	Quantitative verification is a mathematically based technique for analysing the correctness, reliability, performance and other key properties of systems with stochastic behaviour.~\cite{Baier-etal-2003,Kwiatkowska2007:SFM} The technique captures this behaviour into {Markov models}, formalises the properties of interest as {probabilistic temporal logic} formulae over these models, and employs efficient algorithms for their analysis. Examples of such properties include the probability of mission failure for an autonomous robot, and the expected battery energy required to complete a robotic mission.
	
	In this paper, we focus on the quantitative verification of {continuous-time Markov chains} (CMTCs). CTMCs are Markov models for continuous-time stochastic processes over countable state spaces comprising (i)~a finite set of {states} corresponding to real-world states of the system that are relevant for the analysed properties; and (ii)~the {rates of transition} between these states. 
	We use the following definition adapted from the probabilistic model checking literature~\cite{Baier-etal-2003,Kwiatkowska2007:SFM}.
	
	\begin{definition}
		\label{def:CTMC}
		A continuous-time Markov chain is a tuple
		\begin{equation}
		\mathcal{M}\!=\!(S,s_0,\mathbf{R}),
		\label{eq:CTMC} 
		\end{equation}
		where $S$ is a finite set of states, $s_0\!\in\! S$ is the initial state, and $\mathbf{R}\!:\! S\!\times\! S\!\to\! \mathbb{R}$ is a transition rate matrix such that the probability that the CTMC will leave state $s_i \!\in\! S$ within $t\!>\!0$ time units is  $1\!-\!e^{-t\cdot \sum_{s_k\!\in\! S\setminus\! \{s_i\}} \mathbf{R}(s_i,s_k)}$ and the probability that the new state is $s_j\in S\setminus\{s_i\}$ is $p_{ij}=$ $\mathbf{R}(s_i,s_j)\; /\;\sum_{s_k\in S\setminus\{s_i\}}\;\! \mathbf{R}(s_i,s_k)$.
	\end{definition}
	
	The range of properties that can be verified using CTMCs can be extended by annotating the states and transitions with non-negative quantities called {rewards}.
	
	\begin{definition}
		\label{def:rewards}
		A reward structure over a CTMC $\mathcal{M}\!=\!(S,s_0,$ $\mathbf{R})$ is a pair of functions $(\underline{\rho}, \bm{\iota})$ such that 
		$\underline{\rho} : S\to \mathbb{R}_{\geq 0}$ is a {state reward function} (a vector), and 
		$\bm{\iota} : S\times S\to \mathbb{R}_{\geq 0}$ is a {transition reward function} (a matrix).
	\end{definition}
	
	CTMCs support the verification of quantitative properties expressed in continuous stochastic logic (CSL)~\cite{aziz1996} extended with rewards~\cite{Kwiatkowska2007:SFM}. 
	
	\begin{definition}
		Given a set of atomic propositions $\mathit{AP}$, $a\!\in\!\mathit{AP}$, $p\! \in\! [0,1]$, $I\subseteq\mathbb{R}_{\geq 0}$, $r,t\!\in\!\mathbb{R}_{\geq 0}$ and $\bowtie\,\in\! \{\geq, >, <, \leq \}$, a CSL formula $\Phi$ is defined by the grammar:
		\begin{equation*}
		\begin{array}{ll}
		\!\!\!\Phi\; ::= &\hspace*{-2.3mm} true\; |\;  a\; |\; \Phi \wedge \Phi\; |\; \neg\Phi\; |\; P_{\bowtie p}[X\, \Phi]\; |\; P_{\bowtie p}[\Phi\, U^{I}\, \Phi]\; |\; \mathcal{S}_{\bowtie p}[\Phi]\; | \\
		&\hspace*{-2.3mm} R_{\bowtie r}[I^{=t}]\; | \; R_{\bowtie r}[C^{\leq t}]\; | \; R_{\bowtie r}[F\, \Phi]\; | \; R_{\bowtie r}[S].
		\end{array}
		\end{equation*}
	\end{definition}
	
	\noindent
	Given a CTMC $\mathcal{M}\!=\!(S,s_0,$ $\mathbf{R})$ with states {labelled} with atomic propositions from $AP$ by a function $L\! :\! S\! \to\! 2^{AP}$, and a reward structure $(\underline{\rho}, \bm{\iota})$ over $\mathcal{M}$, the CSL semantics is defined with a satisfaction relation $\models$ over the states and {paths} (i.e., feasible sequences of successive states) of $\mathcal{M}$~\cite{Baier-etal-2003}. 
	The notation $s\models \Phi$ means ``$\Phi$ is satisfied in state $s$''. For any state $s\!\in\! S$, we have: 
	
	\squishlist
	\item $s\models true$, 
	$s \models a$ iff $a\in L(s)$, 
	$s \models \neg \Phi$ iff $\neg (s\models \Phi)$, and 
	$s\models \Phi_1 \wedge \Phi_2$ iff $s\models \Phi_1$ and $s\models \Phi_2$;
	\item $s\models \mathcal{P}_{\bowtie p} [X\, \Phi]$ iff the probability $x$ that $\Phi$ holds in the state  following $s$ satisfies $x \bowtie p$ ({probabilistic next} formula); 
	\item $s\models \mathcal{P}_{\bowtie p} [\Phi_1\, U^{I}\, \Phi_2]$ iff, across all paths starting at $s$, the probability $x$ of going through only states where $\Phi_1$ holds until reaching a state where $\Phi_2$ holds at a time $t\in I$ satisfies $x\bowtie p$  ({probabilistic until} formula);
	\item $s\models \mathcal{S}_{\bowtie p}[\Phi]$ iff, having started in state $s$, the probability $x$ of $\mathcal{M}$ reaching a state where $\Phi$ holds {in the long run} satisfies $x \bowtie p$  ({probabilistic steady-state} formula);
	\item the {instantaneous ($R_{\bowtie r}[I^{=t}]$), cumulative ($R_{\bowtie r}[C^{\leq t}]$), future-state ($R_{\bowtie r}[F\, \Phi]$)} and {steady-state ($R_{\bowtie r}[S]$) reward} formulae hold iff, having started in state $s$, the expected reward $x$ at time instant $t$, cumulated up to time $t$, cumulated until reaching a state where $\Phi$ holds, and achieved at steady state, respectively, satisfies $x\bowtie r$.
	\squishend
	
	Probabilistic model checkers such as PRISM~\cite{kwiatkowska_prism_2011} and Storm~\cite{storm2017} use efficient analysis techniques to compute the actual probabilities and expected rewards associated with probabilistic and reward formulae, respectively. The formulae are then verified by comparing the computed values to the bounds $p$ and $r$. Furthermore, the extended CSL syntax $P_{=?}[X\, \Phi]$, $P_{=?}[\Phi_1\, U^{I}\, \Phi_2]$, $R_{=?}[I^{=t}]$, etc. can be used to obtain these values from the model checkers. 
	
	While the transition rates of the CTMCs verified in this way must be known and constant, advanced quantitative verification techniques~\cite{brim-etal-2013} support the analysis of CTMCs whose transition rates are specified as {intervals}. 
	The technique has been used to synthesise CTMCs corresponding to process configurations and system designs that satisfy quantitative constraints and optimisation criteria~\cite{calinescu_efficient_2018,ceska_prism_psy_2016,calinescu2017rodes}, under the assumption that these bounded intervals are available. 
	Here we introduce a Bayesian framework for computing these intervals in ways that reflect the parametric uncertainty of real-world systems such as autonomous robots.

	\subsection{Bayesian learning of CTMC transition rates} 
	Given two states $s_i$ and $s_j$ of a CTMC such that transitions from $s_i$ to $s_j$ are possible and occur with rate $\lambda$, each transition from $s_i$ to $s_j$ is independent of how state $s_i$ was reached (the Markov property). Furthermore, the time spent in state $s_i$ before a transition to $s_j$ is modelled by a homogeneous Poisson process of rate $\lambda$. Accordingly, the likelihood that `data' collected by observing the CTMC shows $n$ such transitions occurring within a combined time $t$ spent in state $s_i$ is given by the conditional probability:
	\begin{equation}
	\label{eq_likelihood_poisson_process}
	l(\lambda) = \mathit{Pr}\left(\textmd{data} \mid \lambda\right)=\frac{\left(\lambda t\right)^{n}}{n!}e^{-\lambda t}
	\end{equation}
	
	In practice, the rate $\lambda$ is typically unknown, but prior beliefs about its value are available (e.g., from domain experts or from past missions performed by the system modelled by the CTMC) in the form of a probability (density or mass) function $f(\lambda)$. In this common scenario, the Bayes Theorem can be used to derive a {posterior probability function} that combines the likelihood $l(\lambda)$ and the prior $f(\lambda)$ into a better estimate for $\lambda$ at time $t$:
	\begin{equation}
	f(\lambda\mid \textmd{data}) = \frac{l(\lambda)f(\lambda)}{\int_{0}^{\infty} l(\lambda)f(\lambda) \diff \lambda}
	\label{eq_bayes}
	\end{equation}
	where the Lebesgue-Stieltjes integral from the denominator is introduced to ensure that $f(\lambda\mid \textmd{data})$ is a probability function. We note, we use Lebesgue-Stieltjes integration to cover in a compact way both continuous and discrete prior distributions $f(\lambda)$, as these integrals naturally reduce to sums for discrete distributions.
	We calculate the posterior estimate for the rate $\lambda$ at time $t$ as the expectation of~\eqref{eq_bayes}:
	\begin{equation}
	\label{eq_posterior_mean_lambda}
	\lambda^{(t)}=\E\left[\Lambda \mid \text{data}\right]
	=\frac{\int_{0}^{\infty} \lambda l(\lambda)f(\lambda) \diff \lambda}{\int_{0}^{\infty} l(\lambda)f(\lambda) \diff \lambda}\;.
	\end{equation}
	where we use capital letters for random variables and lower case for their realisations.

	\subsection{Interval Bayesian inference for singular events} In the autonomous-robot missions considered in our paper, certain events are extremely rare, and treated as {unique} from a modelling viewpoint. 
	These events include major failures (after each of which the system is modified to remove or mitigate the cause of the failure), and the successful completion of difficult one-off tasks. Using Bayesian inference to estimate the CTMC transition rates associated with such events is challenging because, with no  observations of these events, the posterior estimate is highly sensitive to the choice of a suitable prior distribution. Furthermore, only limited domain knowledge is often available to select and justify a prior distribution for these singular events. 
	
	To address this challenge, we develop a {Bayesian inference using partial priors} (BIPP) estimator that requires only \textit{limited, partial prior knowledge} instead of the complete prior distribution typically needed for Bayesian inference. For one-off events, such knowledge is both more likely to be available and easier to justify. BIPP provides bounded posterior estimates that are robust in the sense that the ground truth rate values are within the estimated intervals.
	
	To derive the BIPP estimator, we note that for one-off events the likelihood~(\ref{eq_likelihood_poisson_process}) becomes 
	%
	%
	\begin{equation}
	l(\lambda)=Pr(\text{data}|\lambda)=e^{-\lambda \cdot t}
	\label{eq_likelihood_f}
	\end{equation}
	because $n=0$. 
	Instead of a prior distribution $f(\lambda)$ (required to compute the posterior expectation~(\ref{eq_posterior_mean_lambda})), we assume that we only have limited partial knowledge consisting of $m\geq 2$ confidence bounds on $f(\lambda)$:
	\begin{equation}
	\label{eq_prior_constraints}
	Pr(\epsilon_{i-1} < \lambda \leq \epsilon_i)=\theta_i 
	\end{equation}
	where $1\leq i \leq m$, $\theta_i>0$, and $\sum_{i=1}^{m} \theta_i=1$. The use of such bounds is a common practice for safety-critical systems. As an example, the IEC61508 safety standard \cite{iec_61508_2010} defines {safety integrity levels} (SILs) for the critical functions of a system based on the bounds for their probability of failure on demand ($\mathit{pfd}$): $\mathit{pfd}$ between $10^{-2}$ and $10^{-1}$ corresponds to SIL~1, $\mathit{pfd}$ between $10^{-3}$ and $10^{-2}$ corresponds to SIL~2, etc.; and testing can be used to estimate the probabilities that a critical function has different SILs. We note that $Pr(\lambda\geq \epsilon_0)=Pr(\lambda\leq \epsilon_m)=1$ and that, when no specific information is available, we can use $\epsilon_0=0$ and $\epsilon_m=+\infty$.
	
	
	The partial knowledge encoded by the constraints~(\ref{eq_prior_constraints}) is far from a complete prior distribution: an infinite number of distributions $f(\lambda)$ satisfy these constraints, and the result below provides bounds for the estimate rate~\eqref{eq_posterior_mean_lambda} across these distributions.
	
	\begin{theorem}
		\label{theorem_BIPP_bounds}
		The set $S_\lambda$ of posterior estimate rates~\eqref{eq_posterior_mean_lambda} computed for all prior distributions $f(\lambda)$ that satisfy~\eqref{eq_prior_constraints} has an infinum $\lambda_l$ and a supremum $\lambda_u$ given by:
		\begin{align}
		\label{eq_reachable_BIPP_lower_bound}
		\!\!\!\!\!\lambda_l = \min \left\{ \frac{\sum_{i=1}^m \left[\epsilon_i l(\epsilon_i)(1-x_i)\theta_i+\epsilon_{i-1}l(\epsilon_{i-1})x_i\theta_i\right]}
		{\sum_{i=1}^{m} [l(\epsilon_{i})(1-x_i)\theta_i+l(\epsilon_{i-1})x_i\theta_i]} \right|
		\nonumber\\
		\forall 1\!\leq\! i\!\leq\! m . x_i\in[0,1] \biggr\}\,,
		\end{align}
		\begin{align}
		\!\!\!\!\lambda_u
		= \max\left\{ \biggl. \frac{\sum_{i=1}^{m} \lambda_{i}l(\lambda_{i})\theta_i}
		{\sum_{i=1}^{m} l(\lambda_{i})\theta_i} \biggr| \,\forall 1\!\leq\! i\!\leq\! m . \lambda_i\!\in\!(\epsilon_{i-1},\epsilon_i] \right\}\!.
		\label{eq_reachable_BIPP_upper_bound}
		\end{align}
	\end{theorem}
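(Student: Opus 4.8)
The plan is to turn the infinite-dimensional search over all priors satisfying~\eqref{eq_prior_constraints} into a finite-dimensional optimisation, by noting that the estimate~\eqref{eq_posterior_mean_lambda} depends on $f$ only through a handful of moments. First I would decompose every admissible prior as $f=\sum_{i=1}^{m}\theta_i f_i$, where each $f_i$ is a probability distribution on $(\epsilon_{i-1},\epsilon_i]$; the constraints~\eqref{eq_prior_constraints} are exactly the statement that such a decomposition with the prescribed weights $\theta_i$ exists. Substituting this into~\eqref{eq_posterior_mean_lambda} and using $l(\lambda)=e^{-\lambda t}$ gives $\lambda^{(t)}=\sum_i\theta_i v_i\,/\,\sum_i\theta_i u_i$ with $u_i=\E_{f_i}[l(\Lambda)]$ and $v_i=\E_{f_i}[\Lambda\, l(\Lambda)]$, so $f$ enters only through the $m$ pairs $(u_i,v_i)$. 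As $f_i$ ranges over all distributions on the $i$-th interval, $(u_i,v_i)$ ranges over the convex hull of the moment curve $C_i=\{(l(\lambda),\lambda l(\lambda)):\lambda\in[\epsilon_{i-1},\epsilon_i]\}$, and the feasible set for the full vector is the compact convex product $K=\prod_{i=1}^{m}\mathrm{conv}(C_i)$.

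Next I would exploit the structure of the objective. Since $l>0$ and every $\theta_i>0$, the denominator $\sum_i\theta_i u_i$ is strictly positive on $K$, so the map $(u_i,v_i)_i\mapsto\lambda^{(t)}$ is linear-fractional; its sub- and super-level sets are intersections of $K$ with half-spaces, hence it is simultaneously quasi-convex and quasi-concave. Consequently both its supremum and its infimum over the compact convex set $K$ are attained at extreme points of $K$, and the extreme points of a product are products of extreme points of the factors. Eliminating $\lambda$ via $u=e^{-\lambda t}$ turns $C_i$ into the graph of $v=-u\ln u/t$, which is strictly concave ($v''=-1/(tu)<0$); therefore every point of $C_i$ is an extreme point of $\mathrm{conv}(C_i)$, and each optimal $f_i$ must be a Dirac mass at a single $\lambda_i$. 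Both extrema thus collapse to the point-mass form $\sum_i\theta_i\lambda_i l(\lambda_i)\,/\,\sum_i\theta_i l(\lambda_i)$, and maximising this over $\lambda_i\in(\epsilon_{i-1},\epsilon_i]$ is precisely~\eqref{eq_reachable_BIPP_upper_bound}.

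To sharpen the infimum into the endpoint form~\eqref{eq_reachable_BIPP_lower_bound}, I would analyse the point-mass objective $F(\lambda_1,\dots,\lambda_m)=\sum_i\theta_i\lambda_i l(\lambda_i)\,/\,\sum_i\theta_i l(\lambda_i)$ one coordinate at a time. Holding the other coordinates fixed and using $l'=-t\,l$, a short computation shows that the sign of $\partial F/\partial\lambda_i$ coincides with the sign of a strictly decreasing function of $\lambda_i$; hence each section $\lambda_i\mapsto F$ is unimodal and any interior critical point is a \emph{maximum}, so the minimum over $[\epsilon_{i-1},\epsilon_i]$ is always attained at an endpoint. Therefore the infimum is realised by priors supported only on the interval endpoints: writing the split inside interval $i$ as mass $x_i\theta_i$ at $\epsilon_{i-1}$ and $(1-x_i)\theta_i$ at $\epsilon_i$, and minimising the resulting expression over $x_i\in[0,1]$, yields exactly~\eqref{eq_reachable_BIPP_lower_bound}. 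This also explains the asymmetry between the two bounds: the unique interior critical point of each section is a maximum, so the supremum may sit strictly inside an interval whereas the infimum never does.

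The main obstacle I expect is the very first reduction: justifying that, although infinitely many priors satisfy~\eqref{eq_prior_constraints}, it suffices to consider finitely-supported ones. The crux is that the estimate is linear-fractional in the interval moments $(u_i,v_i)$, so a Bauer-type maximum principle (extrema of quasi-linear functionals occur at extreme points) forces the optimum to point masses; the delicate parts are identifying the extreme points of the moment-curve hull correctly and respecting the coupling across intervals, since the shared denominator prevents optimising each interval in isolation. The remaining technicalities are routine: the half-open intervals and the unbounded cases $\epsilon_0=0$ or $\epsilon_m=+\infty$ (where $C_i$ degenerates towards the origin as $\lambda\to\infty$) mean the bounds are approached in the closure, which a short limiting argument handles without altering the stated formulas.
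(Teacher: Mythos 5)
Your proposal is correct, but it reaches the two bounds by a genuinely different route than the paper. The paper argues interval by interval with classical integral inequalities: it applies the first mean value theorem for integrals to replace each conditional prior by representative points $\lambda_i,\lambda'_i$, then bounds $\lambda'_i l(\lambda'_i)$ from above via Jensen's inequality and from below via the converse Jensen (Lah--Ribari\v{c}) inequality, both applied to the concave function $g(w)=w\,l^{-1}(w)$ of Lemma~\ref{lemma_concave}; the converse inequality hands over the endpoint-split parametrisation $x_i$ of \eqref{eq_reachable_BIPP_lower_bound} in one step (Propositions~\ref{prop_BIPP_m_point_for_upper_bound} and~\ref{prop_BIPP_m_point_for_lower_bound}). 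You instead run a convex-geometry argument in moment space: the estimate \eqref{eq_posterior_mean_lambda} depends on each conditional prior $f_i$ only through $(u_i,v_i)=(\E_{f_i}[l(\Lambda)],\E_{f_i}[\Lambda\, l(\Lambda)])$, the feasible set is a product of convex hulls of planar curves, the linear-fractional objective attains both extrema at extreme points (Bauer's principle, or equivalently the weighted-mediant inequality), and strict concavity of the very same function $g$ --- your computation $v''=-1/(tu)<0$ is exactly Lemma~\ref{lemma_concave} --- identifies those extreme points with Dirac masses. Your coordinate-wise derivative analysis is also sound: the sign of $\partial F/\partial\lambda_i$ equals the sign of $\theta_i l(\lambda_i)+B(1-\lambda_i t)+tA$ (with $A,B$ collecting the other coordinates), which is strictly decreasing in $\lambda_i$, so each section is unimodal with any interior critical point a maximum and the minimiser forced to an endpoint; this recovers \eqref{eq_reachable_BIPP_lower_bound} because the $x$-parametrised form is linear-fractional in $(x_1,\dots,x_m)$, hence minimised at a vertex of $[0,1]^m$, and the endpoint-supported priors are themselves admissible, which sandwiches the vertex minimum, the box minimum, and $\inf S_\lambda$ to a common value (a step you leave implicit but which is immediate). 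What each approach buys: yours unifies both bounds under a single extreme-point principle and explains the asymmetry between them (the maximiser may sit strictly inside an interval, the minimiser never does); the paper's inequality route avoids moment-set and extreme-point machinery and produces the $x_i$-form of the lower bound directly. Both treatments handle the half-open intervals $(\epsilon_{i-1},\epsilon_i]$ and the case $\epsilon_m=+\infty$ by the same closure/limiting argument, which is precisely why the theorem asserts an infimum and a supremum rather than attained extrema.
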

	
	\noindent
	We provide a formal proof of Theorem~{\ref{theorem_BIPP_bounds}} in
	Section~{\hyperlink{bippProof}{4.1}}.
	
	The values $\lambda_l$ and $\lambda_u$ can be computed using numerical optimisation software packages available, for instance, within widely used mathematical computing tools like MATLAB and Maple. For applications where computational resources are limited or the BIPP estimator is used online with tight deadlines, we provide closed-form estimator bounds for $m=3$ (with $m=2$ as a subcase).
	
	
	\begin{corollary}
		\label{corollary_closed_form_BIPP_3}
		When $m=3$, the bounds~\eqref{eq_reachable_BIPP_lower_bound}  and~\eqref{eq_reachable_BIPP_upper_bound} satisfy:
		\begin{align}
		\lambda_l
		\geq 
		\begin{cases} \frac{\epsilon_{1}l(\epsilon_{1})\theta_2}{\theta_1+l(\epsilon_{1})\theta_2}, & \text{if } \frac{\theta_2 (\epsilon_{1}-\epsilon_2)}{\theta_1}
		>\frac{\epsilon_{2}l(\epsilon_{2})- \epsilon_{1}l(\epsilon_{1}) }{l(\epsilon_{1})l(\epsilon_{2})}
		\\[1mm]
		\frac{\epsilon_{2}l(\epsilon_{2})\theta_2}{\theta_1+l(\epsilon_{2})\theta_2}, & \text{otherwise}\end{cases}
		\label{eq:closed-form-lower-bound}
		\end{align}	
		and
		\begin{align}
		\lambda_u < \begin{cases}
		\frac{\epsilon_1 l(\epsilon_1) \theta_1 +\epsilon_2 l(\epsilon_2) \theta_2+\frac{1}{t} l(\frac{1}{t})(1-\theta_1-\theta_2)}
		{ l(\epsilon_1) \theta_1 }, & \text{if } t < \frac{1}{\epsilon_{2}}
		\\[1mm]
		\frac{\epsilon_1 l(\epsilon_1) \theta_1 +\frac{1}{t} l(\frac{1}{t}) \theta_2+\epsilon_2 l(\epsilon_2)(1-\theta_1-\theta_2)}
		{ l(\epsilon_1) \theta_1 }, & \text{if }  \frac{1}{\epsilon_{2}} \leq t \leq \frac{1}{\epsilon_{1}}
		\\[1mm]
		\frac{\epsilon_1 l(\epsilon_1) (\theta_1+\theta_2) +\epsilon_2 l(\epsilon_2)(1-\theta_1-\theta_2)}
		{ l(\epsilon_1) \theta_1 }, & \text{otherwise} 
		\end{cases}
		\label{eq:closed-form-upper-bound}
		\end{align}
	\end{corollary}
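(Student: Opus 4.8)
The plan is to specialise the two optimisation problems in Theorem~\ref{theorem_BIPP_bounds} to $m=3$, using the natural endpoints $\epsilon_0=0$ and $\epsilon_3=+\infty$ together with the singular-event likelihood $l(\lambda)=e^{-\lambda t}$ from~\eqref{eq_likelihood_f}. Both $\lambda_l$ and $\lambda_u$ are optimisations of a ratio of sums over a product domain, so I would treat each as a linear-fractional programme and exploit that such objectives attain their extrema on the boundary of the feasible box. Two auxiliary facts drive everything: $l(\lambda)=e^{-\lambda t}$ is strictly decreasing with $l(0)=1$ and $l(+\infty)=0$, and the map $g(\lambda):=\lambda\,l(\lambda)=\lambda e^{-\lambda t}$ is unimodal with a single interior maximum at $\lambda=1/t$ and $g(0)=g(+\infty)=0$.

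For the lower bound~\eqref{eq_reachable_BIPP_lower_bound}, the objective is a ratio of functions that are \emph{affine} in each $x_i$ separately, hence M\"obius (and therefore monotone) in each coordinate on $[0,1]$ since the denominator stays positive; its infimum over $[0,1]^3$ is thus attained at a vertex $x_i\in\{0,1\}$. First I would pin down $x_1$ and $x_3$. Because $\epsilon_0 l(\epsilon_0)=0$ while the corresponding denominator weight $l(\epsilon_0)\theta_1=\theta_1$ is the largest available, putting the first interval's mass at $\epsilon_0=0$ (i.e.\ $x_1=1$) lowers the ratio; I would formalise this by comparing the two candidate values and using $l(\epsilon_1)\le 1$. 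Symmetrically, since $\epsilon_3 l(\epsilon_3)=0$ and $l(\epsilon_3)=0$, the choice $x_3=0$ removes the vanishing-likelihood tail term, which lowers the ratio whenever the current value is below $\epsilon_2$ --- always true here because the ratio is a weighted average of points not exceeding $\epsilon_2$. This reduces the problem to $x_2\in\{0,1\}$, whose two vertices yield exactly the two closed-form expressions in~\eqref{eq:closed-form-lower-bound}; a single cross-multiplication of these two candidates produces the stated case-selection inequality.

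For the upper bound~\eqref{eq_reachable_BIPP_upper_bound}, I would not solve the coupled optimisation exactly but instead bound the ratio conservatively by $(\text{maximum of the numerator})/(\text{minimum of the denominator})$, a valid one-sided bound for a ratio of positive quantities. The numerator $\sum_i g(\lambda_i)\theta_i$ is maximised term-by-term using the unimodality of $g$: on each interval $(\epsilon_{i-1},\epsilon_i]$ the maximiser is the right endpoint when $1/t\ge\epsilon_i$ (increasing regime), the left endpoint when $1/t\le\epsilon_{i-1}$ (decreasing regime), and the interior point $1/t$ when $1/t$ lies inside. The three cases of~\eqref{eq:closed-form-upper-bound} correspond precisely to whether the peak $1/t$ falls in the third interval ($t<1/\epsilon_2$), the second ($1/\epsilon_2\le t\le1/\epsilon_1$), or the first ($t>1/\epsilon_1$). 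For the denominator I would keep only its first summand and use $\lambda_1\le\epsilon_1\Rightarrow l(\lambda_1)\ge l(\epsilon_1)$ together with non-negativity of the remaining terms to get $\sum_i l(\lambda_i)\theta_i\ge l(\epsilon_1)\theta_1$; dividing the three numerator bounds by $l(\epsilon_1)\theta_1$ gives the three right-hand sides.

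The main obstacle is the case analysis rather than any single calculation. For the lower bound the delicate points are justifying that $x_1=1$ and $x_3=0$ are simultaneously optimal \emph{independently of} $x_2$ (which I would do by the two endpoint-comparison arguments above, each reducing to the elementary inequalities $l(\epsilon_1)\le1$ and ``value $\le\epsilon_2$''), and then getting the direction of the comparison between the two surviving candidates right. For the upper bound the crux is the unimodality of $g$ and the correct placement of the peak $1/t$ relative to $\epsilon_1,\epsilon_2$, which fixes the three regimes; care is also needed with the improper endpoints $\epsilon_0=0$ and $\epsilon_3=+\infty$, where one must pass to the limits $g(\lambda)\to0$ and $l(\lambda)\to0$ rather than evaluate directly.
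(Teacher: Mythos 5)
Your proposal has a genuine gap in each of the two bounds; neither half, as written, establishes the printed statement.

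\textbf{Upper bound, third case.} Your recipe (maximise the numerator term-by-term, minimise the denominator) does not yield the printed expression when $t>1/\epsilon_1$. In that regime the peak $1/t$ of $g(\lambda)=\lambda l(\lambda)$ lies \emph{inside} the first interval $(0,\epsilon_1]$, so your own rule puts $\frac{1}{t}l(\frac{1}{t})\theta_1$ in the first numerator slot; dividing by $l(\epsilon_1)\theta_1$ then gives the bound $\bigl(\frac{1}{t}l(\frac{1}{t})\theta_1+\epsilon_1 l(\epsilon_1)\theta_2+\epsilon_2 l(\epsilon_2)(1-\theta_1-\theta_2)\bigr)/\bigl(l(\epsilon_1)\theta_1\bigr)$. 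Since $1/t$ is the unique maximiser of $g$, we have $\frac{1}{t}l(\frac{1}{t})>\epsilon_1 l(\epsilon_1)$ here, so what you prove is strictly \emph{larger} than the third case of \eqref{eq:closed-form-upper-bound}, whose first numerator term is $\epsilon_1 l(\epsilon_1)\theta_1$; i.e., you prove a weaker inequality, not the corollary. The decoupled ``max numerator over min denominator'' estimate is too lossy in this case; the fix is to keep the first interval coupled to the denominator: with $D=\sum_j l(\lambda_j)\theta_j$ one has $\frac{g(\lambda_1)\theta_1}{D}=\lambda_1\cdot\frac{l(\lambda_1)\theta_1}{D}\le\lambda_1\le\epsilon_1=\frac{\epsilon_1 l(\epsilon_1)\theta_1}{l(\epsilon_1)\theta_1}$, and only the $i=2,3$ terms are bounded by $\sup g\cdot\theta_i/(l(\epsilon_1)\theta_1)$. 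This coupled estimate reproduces all three printed cases; it coincides with your bound in cases 1 and 2 (where $g$ is increasing on $(0,\epsilon_1]$, so $\sup g = g(\epsilon_1)$ there), which is why your argument happens to work in those cases only.

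\textbf{Lower bound, case selection.} Your reduction (vertex optimality of the linear-fractional objective, then $x_1=1$, $x_3=0$, then $x_2\in\{0,1\}$) is sound and yields $\lambda_l=\min(A,B)$ with $A=\frac{\epsilon_1 l(\epsilon_1)\theta_2}{\theta_1+l(\epsilon_1)\theta_2}$ and $B=\frac{\epsilon_2 l(\epsilon_2)\theta_2}{\theta_1+l(\epsilon_2)\theta_2}$. But your final claim, that cross-multiplying the two candidates ``produces the stated case-selection inequality'', is exactly the step that fails: the cross-multiplication gives $A>B$ if and only if $\frac{\theta_2(\epsilon_1-\epsilon_2)}{\theta_1}>\frac{\epsilon_2 l(\epsilon_2)-\epsilon_1 l(\epsilon_1)}{l(\epsilon_1)l(\epsilon_2)}$, so the printed condition selects $A$ precisely when $A$ is the \emph{larger} candidate, whereas your derivation identifies $\lambda_l$ with the \emph{smaller} one. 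Concretely, for $\epsilon_1=1$, $\epsilon_2=2$, $\theta_1=\theta_2=\frac13$, $t=0.1$ the printed condition fails ($-1\not> \approx 0.99$), so \eqref{eq:closed-form-lower-bound} asserts $\lambda_l\ge B\approx 0.90$, yet evaluating the objective of \eqref{eq_reachable_BIPP_lower_bound} at the feasible point $(x_1,x_2,x_3)=(1,1,0)$ gives $A\approx 0.47$, hence $\lambda_l\le A<B$. In other words, carried out honestly, your argument produces the two printed expressions with the opposite selection rule; it does not establish the corollary as printed, and it in fact reveals that the printed lower-bound cases are swapped relative to what the minimisation gives. This discrepancy needed to be detected and resolved (e.g., by stating and proving the corrected case rule), not asserted away.
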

	
	\begin{corollary}
		\label{corollary_closed_form_BIPP_2}
		Closed-form BIPP bounds for $m=2$ can be obtained
		by setting $\epsilon_2=\epsilon_1$ and $\theta_2=0$ in \eqref{eq:closed-form-lower-bound} and \eqref{eq:closed-form-upper-bound}.
	\end{corollary}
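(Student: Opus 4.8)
The plan is to obtain the $m=2$ bounds not by re-deriving them from scratch, but by exhibiting the $m=2$ partial-prior problem as a degenerate instance of the $m=3$ problem and then invoking Corollary~\ref{corollary_closed_form_BIPP_3}. Concretely, I would take the two $m=2$ intervals $(\epsilon_0,\epsilon_1]$ and $(\epsilon_1,\epsilon_2]$, carrying probabilities $\theta_1$ and $1-\theta_1$, and insert a spurious breakpoint at $\epsilon_1$ itself, so that in the $m=3$ labelling the middle interval is the empty set $(\epsilon_1,\epsilon_1]$ carrying zero probability, while the third interval inherits the upper mass $1-\theta_1$. This is exactly the assignment $\epsilon_2=\epsilon_1$, $\theta_2=0$. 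Under this relabelling the constraint family~\eqref{eq_prior_constraints} for $m=3$ contains the vacuous constraint $\Pr(\epsilon_1<\lambda\le\epsilon_1)=0$ and otherwise coincides termwise with the $m=2$ constraints, so the two problems admit precisely the same set of feasible priors $f(\lambda)$.

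The first substantive step is to show that the \emph{exact} extrema of Theorem~\ref{theorem_BIPP_bounds} are unchanged by this embedding. This is immediate at the level of the optimisation problems~\eqref{eq_reachable_BIPP_lower_bound}--\eqref{eq_reachable_BIPP_upper_bound}: every occurrence of the middle interval's representative (the variable $\lambda_2$ in~\eqref{eq_reachable_BIPP_upper_bound}, or equivalently the $x_2$-weighted pair $\epsilon_1,\epsilon_2$ in~\eqref{eq_reachable_BIPP_lower_bound}) enters both numerator and denominator only through a factor $\theta_2$, which now vanishes; hence that decision variable drops out and the remaining maximisation/minimisation over $\lambda_1\in(\epsilon_0,\epsilon_1]$ and over the upper interval is verbatim the $m=2$ optimisation. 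Therefore the $\lambda_l$ and $\lambda_u$ of the degenerate $m=3$ instance coincide with those of the genuine $m=2$ instance.

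It then remains to transfer the \emph{closed-form} estimates of Corollary~\ref{corollary_closed_form_BIPP_3} to this configuration. Since that corollary is stated for admissible configurations with $\theta_i>0$ and $\epsilon_{i-1}<\epsilon_i$, I would argue by continuity: letting $\theta_2\downarrow 0$ and $\epsilon_2\downarrow\epsilon_1$, the right-hand sides of~\eqref{eq:closed-form-lower-bound} and~\eqref{eq:closed-form-upper-bound} remain continuous, because the denominators $\theta_1+l(\epsilon_1)\theta_2$, $\theta_1+l(\epsilon_2)\theta_2$ and $l(\epsilon_1)\theta_1$ all tend to positive limits built from $\theta_1>0$, so no singularity is crossed; meanwhile the exact $\lambda_l,\lambda_u$ converge to their $m=2$ values by the previous step. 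Passing to the limit preserves the inequalities $\lambda_l\ge(\cdot)$ and $\lambda_u<(\cdot)$, and evaluating the limit is precisely the stated substitution. Simplifying, the guard $\tfrac{\theta_2(\epsilon_1-\epsilon_2)}{\theta_1}>\tfrac{\epsilon_2 l(\epsilon_2)-\epsilon_1 l(\epsilon_1)}{l(\epsilon_1)l(\epsilon_2)}$ in~\eqref{eq:closed-form-lower-bound} collapses to $0>0$ and selects the second branch, and in~\eqref{eq:closed-form-upper-bound} the threshold $1/\epsilon_2$ merges with $1/\epsilon_1$, so the three cases reduce to the two regimes $t<1/\epsilon_1$ and $t\ge1/\epsilon_1$.

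The hard part will not be the algebra but making the continuity transfer airtight: I must verify that Corollary~\ref{corollary_closed_form_BIPP_3} remains a valid \emph{bound}, and not merely a continuous expression, on the boundary $\theta_2=0$, $\epsilon_2=\epsilon_1$ of its admissible domain, i.e. that nothing in its derivation secretly required strict positivity of $\theta_2$ or the strict ordering $\epsilon_1<\epsilon_2$ beyond what the limit repairs, and that the implicit outer endpoint of the $m=3$ problem is correctly identified with the upper endpoint $\epsilon_2$ of the $m=2$ problem so that the reduced formulae retain their intended meaning. Corollary~\ref{corollary_closed_form_BIPP_2} then follows by reading off the simplified branches.
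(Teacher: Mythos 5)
Your reduction---treating the $m=2$ problem as the degenerate $m=3$ instance with a zero-mass, empty middle cell $(\epsilon_1,\epsilon_1]$---is exactly the idea behind Corollary~\ref{corollary_closed_form_BIPP_2}, and your endpoint algebra is right: the guard in \eqref{eq:closed-form-lower-bound} becomes $0>0$ so the lower bound collapses to $0$, and the three regimes of \eqref{eq:closed-form-upper-bound} merge into $t<1/\epsilon_1$ and $t\geq 1/\epsilon_1$ with plateau $\epsilon_1/\theta_1$, matching the paper's $m=2$ evaluation (Fig.~\ref{fig:BIPPResultsD}). The genuine gap lies in the mechanism you choose for transferring Corollary~\ref{corollary_closed_form_BIPP_3} to the boundary of its admissible domain, namely the continuity/limit argument.

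Two concrete problems. First, limits do not preserve strict inequalities: from $\lambda_u<B$ for each admissible $(\theta_2,\epsilon_2)$, where $B$ denotes the right-hand side of \eqref{eq:closed-form-upper-bound}, you can conclude only $\lambda_u\leq B$ in the limit, so the upper bound in the strict form stated would not follow. Second, your assertion that ``the exact $\lambda_l,\lambda_u$ converge to their $m=2$ values by the previous step'' conflates two different statements: your previous step establishes equality of the extrema \emph{at} the degenerate parameter point, whereas the limit argument needs \emph{continuity} of the extrema as $(\theta_2,\epsilon_2)\to(0,\epsilon_1)$ along admissible configurations---a claim about suprema and infima over parameter-dependent, half-open and possibly unbounded domains, which is precisely the ``hard part'' you flag and then leave unproved. (Note also that literally substituting $\epsilon_2=\epsilon_1$ into \eqref{eq_reachable_BIPP_upper_bound} makes the constraint $\lambda_2\in(\epsilon_{1},\epsilon_2]$ infeasible, so even the degenerate instance requires the convention that zero-weight terms are deleted rather than optimised over.) The repair is to drop the limits entirely, which is in effect what the paper does: its supplementary material derives both corollaries directly from Theorem~\ref{theorem_BIPP_bounds}. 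With $\theta_2=0$ the constraint sets \eqref{eq_prior_constraints} of the two problems coincide exactly; every term in the closed-form bounds carries $\theta_2$ multiplicatively, and the only divisions are by $l(\epsilon_1)\theta_1$ and $\theta_1+l(\cdot)\theta_2$, which stay positive because $\theta_1>0$ and $l>0$. Hence the $m=3$ derivation---including the strict inequality---remains valid verbatim at $\theta_2=0$, $\epsilon_2=\epsilon_1$, and no continuity lemma is needed.
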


	\if 0
	The values taken by the posterior estimate~(\ref{eq_posterior_mean_lambda}) for these distributions form a set $\mathcal{S}_\lambda$, and the following results provide an upper bound and a lower bound for $\mathcal{S}_\lambda$.
	
	\begin{theorem}
		\label{theorem_BIPP_m_point_for_upper_bound}
		\mbox{$\!$There exist $m$ values $\lambda_1\!\in\!(\epsilon_0,\epsilon_1]$, $\lambda_2\!\in\!(\epsilon_1,\epsilon_2]$,} \ldots, $\lambda_m\!\in\!(\epsilon_{m-1},\epsilon_m]$ such that $\sup \mathcal{S}_\lambda$ is the posterior estimate~(\ref{eq_posterior_mean_lambda}) obtained by using as prior the $m$-point discrete distribution with probability mass $f(\lambda_i)=\mathit{Pr}(\lambda=\lambda_i)=\theta_i$ 
		for $i=1,2,\ldots,m$. 
	\end{theorem}
	
	Theorem \ref{theorem_BIPP_m_point_for_upper_bound} implies that the posterior estimate for $\lambda$ over all priors that satisfy the constraints~\eqref{eq_prior_constraints} has a supremum
	\begin{align}
	\!\!\!\!\lambda_u
	= 
	\max\left\{ \biggl. \frac{\sum_{i=1}^{m} \lambda_{i}l(\lambda_{i})\theta_i}
	{\sum_{i=1}^{m} l(\lambda_{i})\theta_i} \biggr| \,\forall 1\!\leq\! i\!\leq\! m . \lambda_i\!\in\!(\epsilon_{i-1},\epsilon_i] \right\}\!,
	\label{eq_reachable_bound_m_CBI}
	\end{align}
	which we obtained by using the prior $f(\lambda)$ from Theorem~\ref{theorem_BIPP_m_point_for_upper_bound} in (\ref{eq_posterior_mean_lambda}). 
	The value $\lambda_u$
	can be computed using numerical optimisation software packages available, for instance, within widely used mathematical computing tools like MATLAB and Maple.
	
	
	\begin{theorem}
		\label{theorem_BIPP_m_point_for_lower_bound}
		There exist $m$ values $x_1,x_2,\ldots,x_m\in[0,1]$ such that $\inf \mathcal{S}_\lambda$ is the posterior estimate~(\ref{eq_posterior_mean_lambda}) obtained by using as prior the $(m+1)$-point discrete distribution with probability mass $f(\epsilon_0)=\mathit{Pr}(\lambda=\epsilon_0)=x_1\theta_1$,  $f(\epsilon_i)=\mathit{Pr}(\lambda=\epsilon_i)=(1-x_{i})\theta_{i}+x_{i+1}\theta_{i+1}$ for $1\leq i<m$, and $f(\epsilon_m)=\mathit{Pr}(\lambda=\epsilon_m)=(1-x_m)\theta_m$. 
	\end{theorem}
	
	Theorem \ref{theorem_BIPP_m_point_for_lower_bound} implies that the posterior estimate for $\lambda$ over all priors that satisfy the constraints~\eqref{eq_prior_constraints} has an infinum
	\begin{align}
	\label{eq_reachable_lower_bound}
	\!\!\!\!\!\lambda_l = \min \left\{ \frac{\sum_{i=1}^m \left[\epsilon_i l(\epsilon_i)(1-x_i)\theta_i+\epsilon_{i-1}l(\epsilon_{i-1})x_i\theta_i\right]}
	{\sum_{i=1}^{m} [l(\epsilon_{i})(1-x_i)\theta_i+l(\epsilon_{i-1})x_i\theta_i]} \right|
	\nonumber\\
	\forall 1\!\leq\! i\!\leq\! m . x_i\in[0,1] \biggr\}\,,
	\end{align}
	which we obtained by using the prior $f(\lambda)$ from Theorem~\ref{theorem_BIPP_m_point_for_lower_bound} in (\ref{eq_posterior_mean_lambda}). 
	As before, $\lambda_l$ can be computed using numerical optimisation techniques.
	
	
	\fi

	\begin{figure*}
		\centering
		\includegraphics[width=\hsize]{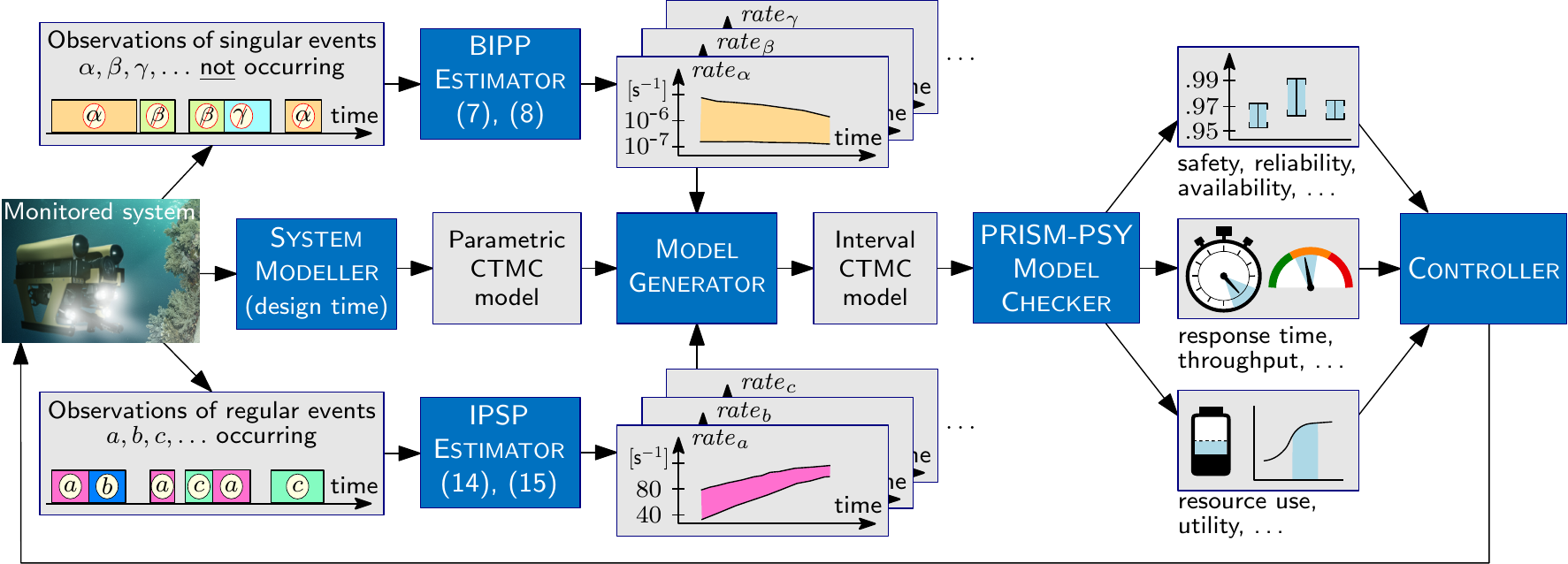}
		\caption{\textbf{Robust Bayesian verification framework.} The integration of Bayesian inference
			using partial priors (BIPP) and Bayesian inference using imprecise probability with sets of priors (IPSP)  
			with interval continuous-time Markov chain (CMTC) model checking supports the online robust quantitative verification and reconfiguration of autonomous systems under parametric uncertainty.}
		\label{fig:integration}
	\end{figure*}
	
	\subsection{Interval Bayesian inference for regular events}
	For CTMC transitions that correspond to regular events within the modelled system, we follow the common practice~\cite{bernardo_bayesian_1994} of using a Gamma prior distribution for each uncertain transition rate $\lambda$:
	\begin{equation}
	f(\lambda) = \Gamma[\lambda;\alpha,\beta]=\frac{\beta^\alpha}{(\alpha-1)!}\lambda^{\alpha-1}e^{-\beta\lambda}.
	\label{eq_gamma}
	\end{equation}
	The Gamma distribution is a frequently adopted conjugate prior distribution for the likelihood~\eqref{eq_likelihood_poisson_process} and, if the prior knowledge assumes an initial value $\lambda^{(0)}$ for the transition rate, the parameters $\alpha>0$ and $\beta>0$ must satisfy 
	\begin{equation}
	\E(\Gamma[\lambda;\alpha,\beta])=\frac{\alpha}{\beta}=\lambda^{(0)}.
	\end{equation}
	The posterior value $\lambda^{(t)}$ for the transition rate after observing $n$ transitions within $t$ time units is then obtained by using the prior~\eqref{eq_gamma} in the expectation~\eqref{eq_posterior_mean_lambda}, as in the following derivation adapted from classical Bayesian theory~\cite{bernardo_bayesian_1994}:
	\begin{multline}
	\lambda^{(t)} = \frac{\int_0^\infty \lambda\left(\frac{(\lambda t)^n}{n!}e^{-\lambda t}\right)\left(\frac{\beta}{(\alpha-1)!}\lambda^{\alpha-1}e^{-\beta\lambda}\right)\diff \lambda}{\int_0^\infty \left(\frac{(\lambda t)^n}{n!}e^{-\lambda t}\right)\left(\frac{\beta}{(\alpha-1)!}\lambda^{\alpha-1}e^{-\beta\lambda}\right)\diff \lambda}\\
	= \frac{\int_0^\infty \lambda^{n+\alpha}e^{-\lambda(t+\beta)}\diff \lambda}{\int_0^\infty \lambda^{n+\alpha-1}e^{-\lambda(t+\beta)}\diff \lambda}
	= \frac{\int_0^\infty \lambda^{n+\alpha}\left(\frac{e^{-\lambda(t+\beta)}}{-(t+\beta)}\right)'\diff \lambda}{\int_0^\infty \lambda^{n+\alpha-1}e^{-\lambda(t+\beta)}\diff \lambda}\\
	= \frac{\left. \left(\lambda^{n+\alpha}\frac{e^{-\lambda(t+\beta)}}{-(t+\beta)}\right) \right|_0^\infty - \int_0^\infty (n+\alpha)\lambda^{n+\alpha-1}\frac{e^{-\lambda(t+\beta)}}{-(t+\beta)}\diff \lambda}{\int_0^\infty \lambda^{n+\alpha-1}e^{-\lambda(t+\beta)}\diff \lambda}\\
	= \frac{0+\frac{n+\alpha}{t+\beta}\int_0^\infty \lambda^{n+\alpha-1}e^{-\lambda(t+\beta)}\diff \lambda}{\int_0^\infty \lambda^{n+\alpha-1}e^{-\lambda(t+\beta)}\diff \lambda}
	=\frac{n+\alpha}{t+\beta}
	=\frac{n+\beta\lambda^{(0)}}{t+\beta}\\
	=\frac{\beta}{t+\beta}\lambda^{(0)} +\frac{t}{t+\beta}\frac{n}{t}
	=\frac{t^{(0)}}{t+t^{(0)}}\lambda^{(0)} +\frac{t}{t+t^{(0)}}\frac{n}{t},
	\label{eq_posterior_rate_gamma}
	\end{multline}
	where $t^{(0)}=\beta$. This notation reflects the way in which the posterior rate $\lambda^{(t)}$ is computed as a weighted sum of the mean rate $\frac{n}{t}$ observed over a time period $t$, and of the prior $\lambda^{(0)}$ deemed as trustworthy as a mean rate calculated from observations over a time period $t^{(0)}$.
	When $t^{(0)}\ll t$ (either because we have low trust in the prior $\lambda^{(0)}$ and thus $t^{(0)}\simeq 0$, or because the system was observed for a time period $t$ that is much longer than $t^{(0)}$), the posterior~\eqref{eq_posterior_rate_gamma} reduces to the maximum likelihood estimator, i.e.\ $\lambda^{(t)} \simeq \frac{n}{t}$. In this scenario, the observations fully dominate the estimator~\eqref{eq_posterior_rate_gamma}, with no contribution from the prior.
	
	The selection of suitable values for the parameters $t^{(0)}$ and $\lambda^{(0)}$ of the traditional Bayesian estimator~\eqref{eq_posterior_rate_gamma} is very challenging. What constitutes a suitable choice often depends on unknown attributes of the environment, or several domain experts may each propose different values for these parameters. In line with recent advances in imprecise probabilistic modelling,
	\cite{krpelik2018imprecise,walter_bayesian_2017,walter_imprecision_2009} we address this challenge by defining a robust transition rate estimator for {Bayesian inference using imprecise probability with sets of priors} (IPSP).
	The IPSP estimator uses ranges $[\underline{t}^{(0)},\overline{t}\textrm{}^{(0)}]$ and $[\underline{\lambda}^{(0)},\overline{\lambda}\textrm{}^{(0)}]$ (corresponding to the  environmental uncertainty, or to input obtained from multiple domain experts) for the two parameters instead of point values. 
	
	The following theorem quantifies the uncertainty that the use of parameter ranges for $t^{(0)}$ and $\lambda^{(0)}$ induces on the posterior rate~\eqref{eq_posterior_rate_gamma}. 
	%
	%
	%
	%
	%
	This theorem specialises and builds on generalised Bayesian inference results~\cite{walter_imprecision_2009} that we adapt for the estimation of CTMC transition rates.

	\begin{theorem}
		\label{theorem_CTMC_IPSP}
		Given uncertain prior parameters $t^{(0)}\in[\underline{t}^{(0)},$ $\overline{t}\textrm{}^{(0)}]$ and $\lambda^{(0)}\in[\underline{\lambda}^{(0)},\overline{\lambda}\textrm{}^{(0)}]$, the posterior rate $\lambda^{(t)}$ from~\eqref{eq_posterior_rate_gamma} can range in the interval  $[\underline{\lambda}^{(t)},\overline{\lambda}\textrm{}^{(t)}]$, where:
		%
		\begin{equation}
		\underline{\lambda}^{(t)}=\begin{cases} \frac{\overline{t}\text{}^{(0)}\underline{\lambda}^{(0)}+n}{\overline{t}\text{}^{(0)}+t}, & \text{if } \frac{n}{t} \geq \underline{\lambda}^{(0)}  
		\\[2mm] 
		\frac{\underline{t}^{(0)}\underline{\lambda}^{(0)}+n}{\underline{t}^{(0)}+t}, & \text{otherwise} 
		\end{cases}
		\\[1mm]
		\label{eq_post_lower_bound_rij}
		\end{equation}
		and
		\begin{equation}
		\overline{\lambda}^{(t)}=\begin{cases} \frac{\overline{t}\text{}^{(0)}\overline{\lambda}\text{}^{(0)}+n}{\overline{t}\text{}^{(0)}+t}, & \text{if } \frac{n}{t} \leq \overline{\lambda}\text{}^{(0)}  
		\\[2mm]
		\frac{\underline{t}^{(0)}\overline{\lambda}\text{}^{(0)}+n}{\underline{t}^{(0)}+t}, & \text{otherwise} 
		\end{cases}.
		\label{eq_post_upper_bound_rij}
		\end{equation}
	\end{theorem}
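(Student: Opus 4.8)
The plan is to treat the posterior rate from~\eqref{eq_posterior_rate_gamma} as a function of the two uncertain parameters,
\[
g\bigl(t^{(0)},\lambda^{(0)}\bigr)=\frac{t^{(0)}\lambda^{(0)}+n}{t^{(0)}+t},
\]
defined over the rectangle $[\underline{t}^{(0)},\overline{t}^{(0)}]\times[\underline{\lambda}^{(0)},\overline{\lambda}^{(0)}]$, and to locate its minimum and maximum through a separate monotonicity analysis in each coordinate. Since $t>0$, $t^{(0)}>0$ and $n\geq 0$, the function $g$ is smooth on this domain, so it suffices to examine the signs of its partial derivatives. Because the rectangle is connected and $g$ is continuous, the image $g([\underline{t}^{(0)},\overline{t}^{(0)}]\times[\underline{\lambda}^{(0)},\overline{\lambda}^{(0)}])$ is itself an interval; hence identifying the extrema is enough to conclude that $\lambda^{(t)}$ ranges over exactly $[\underline{\lambda}^{(t)},\overline{\lambda}^{(t)}]$.

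First I would differentiate with respect to $\lambda^{(0)}$, obtaining $\partial g/\partial\lambda^{(0)}=t^{(0)}/(t^{(0)}+t)>0$. Thus $g$ is strictly increasing in $\lambda^{(0)}$ for every fixed $t^{(0)}$, which forces the minimiser to take $\lambda^{(0)}=\underline{\lambda}^{(0)}$ and the maximiser to take $\lambda^{(0)}=\overline{\lambda}^{(0)}$, reducing each optimisation to a one-dimensional problem in $t^{(0)}$. Next I would differentiate with respect to $t^{(0)}$; a short quotient-rule computation gives $\partial g/\partial t^{(0)}=(\lambda^{(0)}t-n)/(t^{(0)}+t)^2$, whose sign is that of $\lambda^{(0)}-n/t$. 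Hence $g$ increases in $t^{(0)}$ when the prior rate $\lambda^{(0)}$ exceeds the observed rate $n/t$ and decreases when it lies below it.

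Substituting the endpoint $\underline{\lambda}^{(0)}$ for the lower bound then yields two cases: when $n/t\geq\underline{\lambda}^{(0)}$ the function is non-increasing in $t^{(0)}$, so the minimum is attained at $t^{(0)}=\overline{t}^{(0)}$, giving the first branch of~\eqref{eq_post_lower_bound_rij}; otherwise it increases and the minimum sits at $t^{(0)}=\underline{t}^{(0)}$, giving the second branch. The upper bound follows symmetrically by substituting $\overline{\lambda}^{(0)}$ and comparing $n/t$ with $\overline{\lambda}^{(0)}$, which produces the two branches of~\eqref{eq_post_upper_bound_rij}.

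The computation is elementary once this decomposition is chosen; the only point requiring care---and the main, if mild, obstacle---is justifying that the joint extremum over the rectangle can be found by optimising one coordinate after the other. This is legitimate precisely because the sign of $\partial g/\partial\lambda^{(0)}$ does not depend on $t^{(0)}$, so the optimal $\lambda^{(0)}$ is the same corner for every $t^{(0)}$. The direction of monotonicity in $t^{(0)}$ may flip with $\lambda^{(0)}$, but since $\lambda^{(0)}$ has already been pinned to an endpoint, this flip is exactly what generates the two-case structure of the stated bounds rather than introducing any interior optimum. I would also note that on the boundary $n/t=\lambda^{(0)}$ the derivative $\partial g/\partial t^{(0)}$ vanishes, making $g$ constant in $t^{(0)}$ so that both branches coincide; this confirms that the case split in~\eqref{eq_post_lower_bound_rij} and~\eqref{eq_post_upper_bound_rij} is consistent at its seams.
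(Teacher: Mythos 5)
Your proposal is correct and follows essentially the same approach as the paper's proof: exploiting that $\partial \lambda^{(t)}/\partial\lambda^{(0)} = t^{(0)}/(t^{(0)}+t) > 0$ to pin $\lambda^{(0)}$ to an endpoint, then using the sign of $\partial \lambda^{(t)}/\partial t^{(0)} = (\lambda^{(0)}t-n)/(t^{(0)}+t)^2$ to resolve the two-case structure in $t^{(0)}$. Your additional remarks---that the image of the rectangle is an interval by continuity and connectedness, and that the branches agree on the seam $n/t=\lambda^{(0)}$---are sound refinements the paper leaves implicit.
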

	
	\noindent
	We provide a formal proof of Theorem~{\ref{theorem_CTMC_IPSP}} in 
	Section~{\hyperlink{ipspProof}{4.2}}.

	\subsection{Robust verification and adaptation} 
	
	Based on the methods defined earlier, we developed an end-to-end verification framework for the online computation of bounded intervals of CTMC properties. The verification framework integrates our BIPP and IPSP Bayesian interval estimators with interval CTMC model checking~\cite{brim-etal-2013}.  
	As shown in Fig.~\ref{fig:integration}, this involves devising a parametric CTMC model that captures the structural aspects of the system under verification through a \textsc{System Modeller}.
	This activity is typically performed once at design time, i.e., before the system is put into operation.
	By monitoring the system under verification, our framework enables observing both the occurrence of regular events and {the lack of singular events} during times when such events could have occurred (e.g., a catastrophic failure not happening when the system performs a dangerous operation). Our online \textsc{BIPP Estimator} and \textsc{IPSP Estimator} use these observations to calculate expected ranges for the rates of the monitored events, enabling a \textsc{Model Generator} to continually synthesise up-to-date interval CTMCs that model the evolving behaviour of the system. 
	
	The interval CTMCs, which are synthesised from the parametric CTMC model,
	are then continually verified by the \textsc{PRISM-PSY Model Checker}~\cite{ceska_prism_psy_2016}, to compute value intervals for key system properties. As shown in Fig.~\ref{fig:integration} and illustrated in the next section, these properties range from dependability (e.g., safety, reliability and availability) \cite{avizienis_basic_2004} and performance (e.g., response time and throughput) properties to resource use and system utility. Finally, changes in the value ranges of these properties may prompt the dynamic reconfiguration of the system by a \textsc{Controller} module responsible for ensuring that the system requirements are satisfied at all times.

	\section{Results: Robust verification of robotic mission}
	
	\begin{figure*}[t!]
		\centering
		\includegraphics[width=\hsize]{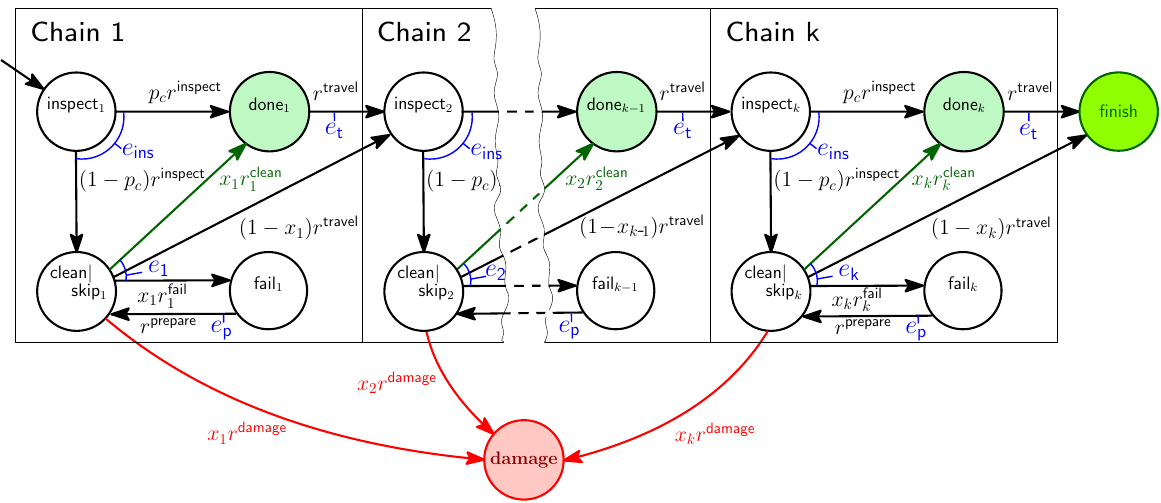}
		\caption{\textbf{Floating chain continuous-time Markov chain (CMTC) model. }CTMC of the floating chain cleaning and inspection mission, where $e_1$, $e_2$, \ldots, $e_k$ represent the mean energy required to clean chains $1$, $2$, \ldots, $k$, respectively. 
			The rate $r^{\mathsf{fail}}_i$ corresponds to a regular event and is therefore modelled using Bayesian inference using imprecise probability with sets of priors (IPSP) from {\eqref{eq_post_lower_bound_rij}} and {\eqref{eq_post_upper_bound_rij}}. 
			The rates $r_{i}^{\mathsf{clean}}$ and $r^{\mathsf{damage}}$ correspond to singular events and are thus modelled using Bayesian inference using partial priors (BIPP) from~{\eqref{eq_reachable_BIPP_lower_bound}} and {\eqref{eq_reachable_BIPP_upper_bound}}.}
		\label{fig:ctmc}
	\end{figure*}
	
	\begin{table*}[th!]
		\renewcommand{\arraystretch}{1.2}
		\centering
		\caption{System requirements for the AUV floating chain inspection and cleaning mission}
		
		\begin{small}
			\begin{tabular}{p{.2cm} p{10.8cm} p{5cm}} %
				\toprule
				\textbf{ID} 
				&\textbf{Informal description} 
				&{\raggedright\textbf{Formal specification$^\dagger$}}
				\\
				\midrule
				\textbf{R1}
				&The probability of mission failure  must not exceed 5\%
				&$P_{\leq 0.05} [F \textsf{ damage}]$\\
				\textbf{R2} 
				&The expected energy consumption must not exceed the remaining energy $E_\mathrm{left}$
				&$R^\mathrm{energy}_{\leq E_\mathrm{left}} [F \textsf{ finish}]$\\
				\textbf{R3}
				& Subject to R1 and R2 being met, maximise the number of cleaned chains
				&{\raggedright find 
					{argmax} $\sum_{i=1}^k x_i$ 
					such that $R1 \land R2$}\\
				\bottomrule
				\multicolumn{3}{l}{$^\dagger$expressed in rewards-extended continuous stochastic logic (see Methods section)}
			\end{tabular}
		\end{small}
		\label{table:reqs}	
	\end{table*}
	
	\subsection{Offshore infrastructure maintenance} We demonstrate how our online robust verification and reconfiguration framework can support an AUV to execute a structural health inspection and cleaning mission of the substructure of an offshore wind farm. 
	Similar scenarios for AUV use in remote, subsea environments have been described in other large-scale robotic demonstration projects, such as the PANDORA EU FP7 project~\cite{pandora2015}. 
	Compared to remotely operated vehicles that must be tethered with expensive oceanographic surface vessels run by specialised personnel, AUVs bring important advantages, including reduced environmental footprint (since no surface vessel consuming fuel is needed), reduced cognitive fatigue for the involved personnel, increased frequency of mission execution, and reduced operational and maintenance cost. 
	
	The offshore wind farm comprises multiple floating wind turbines, with each turbine being a buoyant foundation structure secured to the sea bed with floating chains tethered to anchors weighing several tons.  Wind farms with floating wind turbines offer increased wind exploitation (since they can be installed in deeper waters where  winds are stronger and more consistent), reduced installation costs (since there is no need to build solid foundations), and reduced impact on the visual and maritime life (since they are further from the shore)~\cite{myhr2014levelised}. 
	
	The AUV is deployed to collect data about the condition of $k \geq 1$ floating chains to enable the post-mission identification of problems that could
	affect the structural integrity of the asset (floating chain). 
	When the visual inspection of a chain is hindered due to accumulated biofouling or marine growth, the AUV can use its on-board high-pressure water jet to clean the chain and continue with the inspection task~\cite{pandora2015}. 
	
	The high degrees of \textit{aleatoric uncertainty} in navigation and the perception of the marine environment entail that the AUV might fail to clean a chain. This uncertainty originates from the dynamic conditions of the underwater medium that includes unexpected water perturbations coupled with difficulties in scene understanding due to reduced visibility and the need to operate close to the floating chains. When this occurs, the AUV can retry the cleaning task or skip the chain and move to the next.
	
	\subsection{Stochastic mission modelling}
	
	
	Fig.~\ref{fig:ctmc} shows the parametric CTMC model of the floating chain inspection and cleaning mission. 
	The AUV inspects the $i$-th chain with rate $r^\mathrm{inspect}$ and consumes energy $e_{ins}$. 
	The chain is clean with probability $p_{\mathsf{c}}$ and the AUV travels to the  next chain with rate $r^\mathrm{travel}$ consuming energy $e_t$, or the chain needs cleaning with probability $1-p_{\mathsf{c}}$.
	When the AUV attempts the cleaning ($x_i=1$), the task succeeds with chain-dependent rate $r_i^\mathrm{clean}$, causes catastrophic damage to the floating chain or itself with rate $r^\mathrm{damage}$ or fails with chain-dependent 
	rate $r^\mathrm{fail}_i$. If the cleaning fails, the AUV prepares to retry with known and fixed rate $r^\mathrm{prepare}$ requiring energy $e_p$, and it either retries cleaning ($x_i=1$) or skips the current chain and moves to chain $i\!+\!1$ ($x_i=0$). 
	After executing the tasks on the $k$-th chain, the AUV returns to its base and completes the mission. 
	
	Since the AUV can fail to clean the $i$-th chain with non-negligible probability and multiple times, this is a regular event whose  transition rate $r^\mathrm{fail}_i$ is modelled using the IPSP estimator from (7) and (8).
	In contrast, 
	the AUV is expected to not cause catastrophic damage but, with extremely low probability, may do so only once (after which the AUV and/or its mission are likely to be revised); thus, the corresponding transition rates $r^\mathrm{clean}_i$ and $r^\mathrm{damage}$ are modelled using the BIPP estimator from (14) and (15). The other transition rates, i.e., those for inspection ($r^\mathrm{inspect}$), travelling ($r^\mathrm{travel}$) and preparation ($r^\mathrm{prepare}$), are less influenced by the chain conditions and therefore assumed to be known, e.g., from previous trials and missions; hence, we fixed these transition rates.
	
	When cleaning is needed for the $i$-th chain, the AUV controller synthesises a plan by determining the control parameter $x_i \in \{0,1\}$  for all remaining chains $i, i+1, ...k$ so that the system requirements in Table~\ref{table:reqs} are satisfied.

	\subsection{Robust verification results}
	We demonstrate our solution for robust verification and adaptation using a mission in which the AUV was deployed  to inspect and, if needed, clean six chains placed in a hexagonal arrangement (Fig.~\ref{fig:uuv}). 
	We used $m=3$ and the BIPP estimator {\eqref{eq_reachable_BIPP_lower_bound}} and {\eqref{eq_reachable_BIPP_upper_bound}} for the transition rates $r^\mathrm{clean}_i$ and $r^\mathrm{damage}$, which correspond to singular events. For $r^\mathrm{clean}_i$, we used 
	$\epsilon_1=0.12 + \mathcal{U}(0,0.12)$, 
	$\theta_1=0.10 + \mathcal{U}(0,0.001)$, 
	$\epsilon_2=0.90 + \mathcal{U}(0,0.90)$,
	$\theta_2=0.85 + \mathcal{U}(0,0.0085)$, where $\mathcal{U}(x, y)$ denotes a continuous uniform distribution with $x$ and $y$ being its minimum and maximum values, respectively. 
	For $r^\mathrm{damage}$, we used
	$\epsilon_1=1e\textrm{-8} + \mathcal{U}(0,1e\textrm{-8})$, 
	$\theta_1=0.88 + \mathcal{U}(0,0.0088)$, 
	$\epsilon_2=1e\textrm{-7} + \mathcal{U}(0,1e\textrm{-7})$,
	$\theta_2=0.10 + \mathcal{U}(0,0.001)$.
	For $r^\mathrm{fail}_i$, we used $t^{(0)} = [10 + \mathcal{U}(0,10)]$ and 
	$\lambda ^{(0)} = [0.0163 + \mathcal{U}(0,0.00163)]$.
	During the mission execution, the AUV performs the model checking at every cleaning attempt so that runtime observations are incorporated into the decision making process entailing also 
	that the currently synthesised plan is not necessarily used at subsequent chains. 
	Hence, the AUV only needs to check system configurations where at least the current chain is to be cleaned, thus halving the number of configurations to be checked (since configurations with $x_i=0$ need not be checked). If all of these checks that consider $x_i=1$ fail to satisfy the requirements from Table~\ref{table:reqs}, then the AUV decides to skip the current chain and proceed to inspect and clean the next chain. 

	\begin{figure*}[h!]
		\centering
		\includegraphics[width=0.83\hsize]{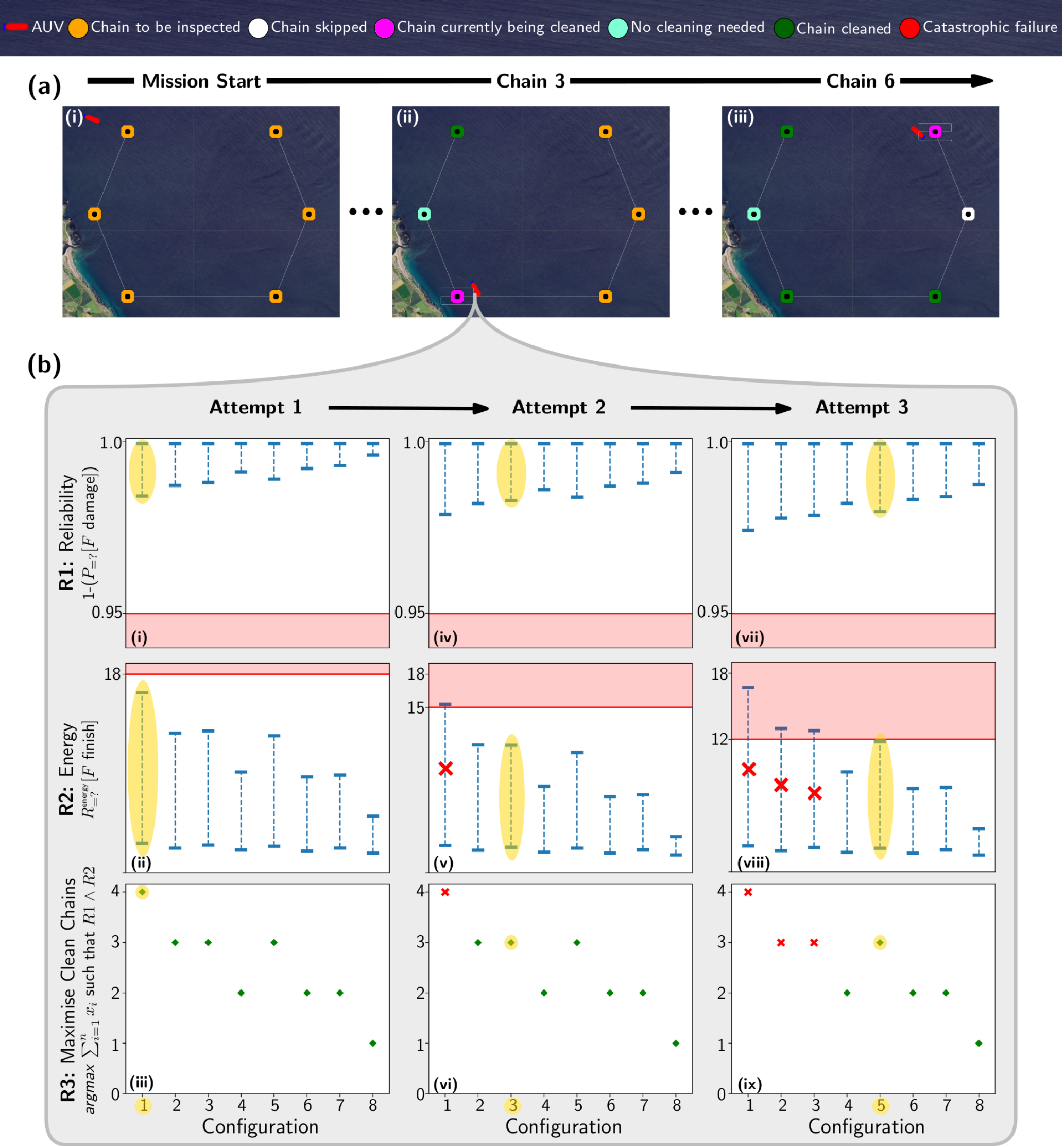}
		
		\caption{ \textbf{Demonstration of autonomous underwater
				vehicle (AUV) inspection and cleaning mission. } 
			\textbf{a} Simulated AUV mission involving the inspection of six wind farm chains and, if required, their cleaning. \textbf{i)} Start of mission; \textbf{ii)} cleaning chain 3; \textbf{iii)} cleaning final chain. At this point, the AUV cleaned three chains, skipped one, and one chain did not require cleaning. 
			\textbf{b} Plots of the outcome of the model checking carried out by the AUV at chain~3. Each row shows the configurations against the requirements. \textbf{(i-iii)} Results during the first attempt at cleaning chain 3. \textbf{(iv-vi)} Results during the second attempt at cleaning. \textbf{(vii-ix)} Results at the third and successful attempt at cleaning the chain. The configurations highlighted in yellow is the chosen configuration for the corresponding attempt.
		}
		\label{fig:uuv}
	\end{figure*}
	
	If a cleaning attempt at chain $i$ failed, 
	the AUV integrates this observation in \eqref{eq_post_lower_bound_rij}\eqref{eq_post_upper_bound_rij}, and performs model checking to determine whether to retry the cleaning or skip the chain. 
	Since the AUV has consumed energy for the failed cleaning attempt, the energy available is reduced accordingly, which in turn can reduce the number of possible system configurations that can be employed and need checking.
	The observation of a failed attempt reduces the lower bound for the reliability of cleaning $x_i$, and may result in a violation of the reliability requirement R1 (Table~\ref{table:reqs}), which may further reduce the number of feasible configurations.
	If the AUV fails to clean chain $i$ repeatedly, this lower bound will continue to decrease, potentially resulting in the AUV having no feasible configuration, and having to skip the current chain.
	Although skipping a chain overall decreases the risk of a catastrophic failure (as the number of cleaning attempts is reduced), leaving uncleaned chains will incur additional cost as a new inspection mission will need to be launched, e.g., using another AUV or human personnel.

	Fig.~\ref{fig:uuv} shows a simulated run of the AUV performing an inspection and cleaning mission (Fig.~\ref{fig:uuv}a).  At each chain that requires cleaning, the AUV decides whether to attempt to clean or skip the current chain.  
	Fig.~\ref{fig:uuv}b provides details of the probabilistic model checking carried out during the inspection and cleaning of chain~3 (Fig.~\ref{fig:uuv}a ii). 
	Overall, the AUV performed multiple attempts to clean chain~3, succeeding on the third attempt.
	
	The results of the model checking analyses for these attempts are shown in successive columns in Fig.~\ref{fig:uuv}b, while each row depicts the analysis of one of the requirements from Table~\ref{table:reqs}. 
	A system configuration is feasible if it satisfies requirements R1 --- the AUV will not encounter a catastrophic failure with a probability of at least 0.95, and R2 --- the expected energy consumption does not exceed the remaining AUV energy. 
	Lastly, if multiple configurations satisfy requirements R1 and R2, then the winner is the configuration that maximises the number of chains cleaned. 
	If there is still a tie, the configuration is chosen randomly from those that clean the most chains.
	
	In the AUV's first attempt at chain~3 (Fig.~\ref{fig:uuv}{b (i, ii, iii)}), all the configurations are feasible, so configuration 1 (highlighted, and corresponding to the highest number of chains cleaned) is selected. 
	This attempt fails, and a second assessment is made (Fig.~\ref{fig:uuv}{b (iv, v, vi)}). 
	This time, only system configurations 2--8 are feasible, and as configurations~2, 3, and~5 maximise R3, a configuration is chosen randomly from this subset (in this case, configuration~3). 
	This attempt also fails, and on the third attempt (Fig.~\ref{fig:uuv}{b (vii, viii, ix)}), only configurations 4--8 are feasible, with 5 maximising R3, and the AUV adopts this configuration and succeeds in cleaning the chain.
	
	In this AUV mission instance, the AUV controller is concerned with cleaning the maximum number of chains and ensuring the AUV returns safely.
	In other variants of our AUV mission, the system properties from requirements R1 and R2 could also be used to determine a winning configuration in the event of a tie between multiple feasible configurations. For example, it might be optimal for the AUV to consume minimal energy in this scenario. Thus, the energy consumption from requirement R2 can be used as a metric to choose a configuration as a tie-breaker.
	
	We also measured the overheads associated with executing the online verification process. 
	Figure~{\ref{fig:verification}} shows the computation overheads incurred by the RBV framework for executing the AUV-based mission. 
	The values comprising each boxplot have been collected over 10 independent runs. Each value denotes the time consumed for a single online robust quantitative verification and reconfiguration step when the AUV attempts to clean the indicated chain. 
	For instance, the boxplot associated with the `Chain 1' (`Chain 2') label on the x-axis signifies that the AUV attempts to clean chain 1  (chain 2) and corresponds to the time consumed by the RBV framework to analyse 64 (32) configurations. 
	Overall, the time overheads are reasonable for the purpose of this mission.
	Since the AUV has more configurations to analyse at the earlier stages of the mission (e.g., when inspecting chain 1), the results follow the anticipated exponential pattern. 
	The number of configurations decreases by half each time the AUV progresses further into the mission and moves to the next chain.  
	Another interesting observation is that the length of each boxplot is small, i.e., the lower and upper quartiles are very close, indicating that the RBV framework showcases a consistent behaviour in the time taken for its execution.

	The consumed time comprises (1) the time required to compute the posterior estimate bounds of the modelled transition rates,  $r_i^\textrm{clean}$, $r^{\textrm{fail}}$, $1 \leq i \leq k$, and  $r^{\textrm{damage}}$, using the BIPP and IPSP estimators; (2) the time required to compute the value intervals for requirements R1 and R2 using the probabilistic model checker \textsc{Prism- Psy}~{\cite{ceska_prism_psy_2016}}; and (3) the time needed to find the best configuration satisfying requirements R1 and R2, and maximising requirement R3. 
	Our empirical analysis provided evidence that the execution of the BIPP and IPSP estimators and the  selection of the best configuration have negligible overheads with almost all time incurred by \textsc{Prism-Psy}. 
	This outcome is not surprising and is aligned with the results reported in~{\cite{ceska_prism_psy_2016}} concerning the execution overheads of the model checker.
	
	The simulator used for the AUV mission, developed on top of the open-source MOOS-IvP middleware,~\cite{Benjamin2013MRO} and a video showing the execution of this AUV mission instance are available at~\url{http://github.com/gerasimou/RBV}.

	\begin{figure}[t]
		\centering
		\includegraphics[width=\linewidth]{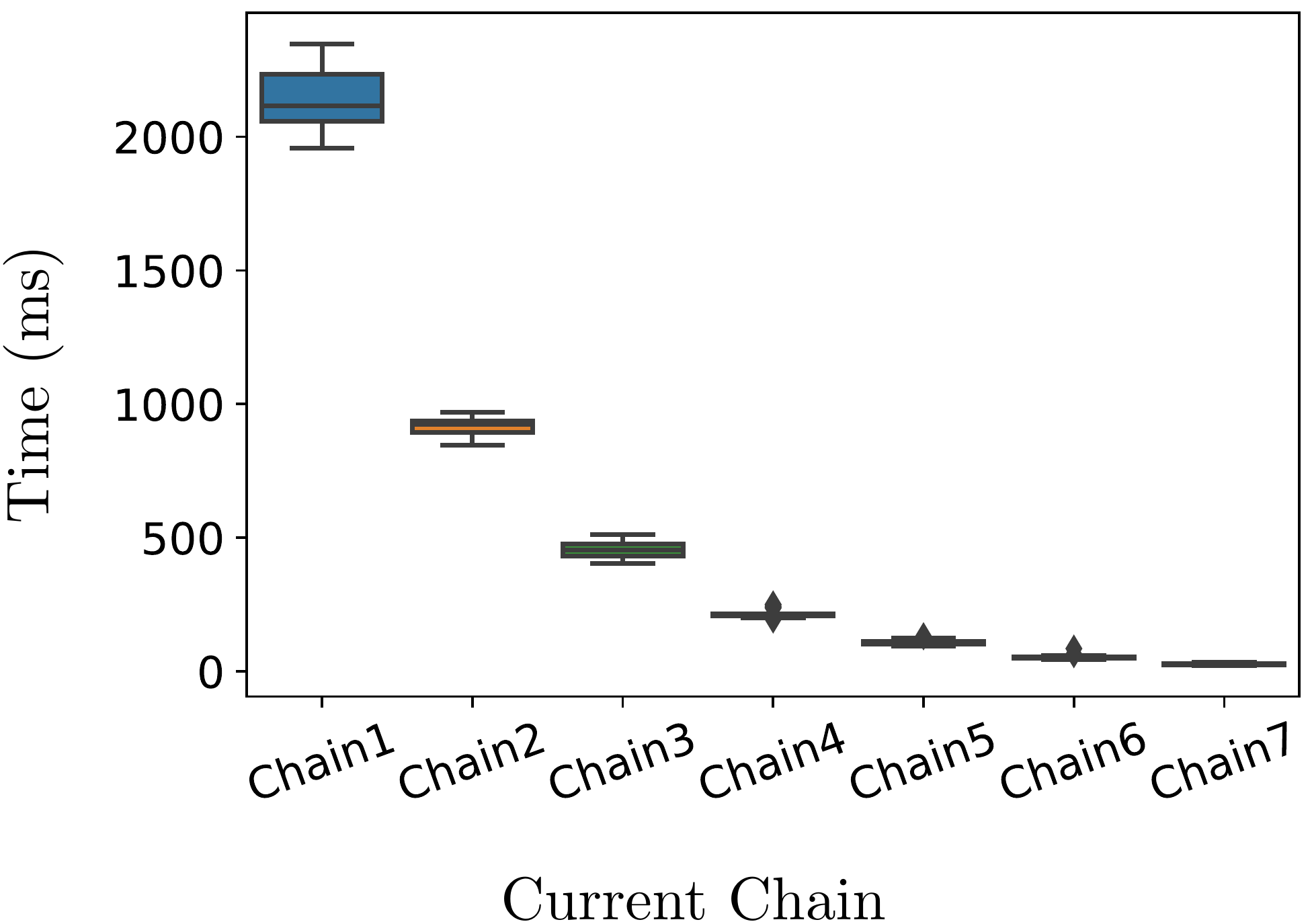}
		\caption{\textbf{Verification time overheads. }
			Time taken by our robust Bayesian verification framework to execute the online quantitative verification and reconfiguration step over 10 independent runs when the robot attempts to clean the indicated chain.}
		\label{fig:verification}
	\end{figure}
	

	\section{Discussion \& Conclusions}
	
	Unlike 
	single-point estimators of Markov model parameters \cite{epifani_model_2009,filieri_formal_2012,calinescu_adaptive_2014,filieri_lightweight_2015}, our Bayesian framework provides interval estimates that capture the inherent uncertainty of these parameters, enabling the robust quantitative verification of systems such as autonomous robots. Through its ability to exploit prior knowledge, the framework differs fundamentally from, and is superior to, a recently introduced approach to synthesising intervals for unknown transition parameters based on the frequentist theory of simultaneous confidence intervals.~\cite{calinescu_2016_formal,10.1007/978-3-662-49674-9_32,calinescu2017rodes} Furthermore, instead of applying the same estimator to all Markov model transition parameters like existing approaches, our framework is the first to handle parameters corresponding to singular and regular events differently. This is an essential distinction, especially for the former type of parameter, for which the absence of observations violates a key premise of existing estimators. Our BIPP estimator avoids this invalid premise, and computes two-sided bounded estimates for singular CTMC transition rates---a considerable extension of our preliminary work to devise one-sided bounded estimates for the singular transition probabilities of discrete-time Markov chains.~\cite{zhao_probabilistic_2019}
	
	The proposed Bayesian framework is underpinned by the theoretical foundations of imprecise probabilities \cite{walter_imprecision_2009,walter_bayesian_2017} and Conservative Bayesian Inference (CBI), \cite{bishop_toward_2011,strigini_software_2013,zhao_assessing_2020} integrated with recent advances in the verification of interval CTMCs.~\cite{ceska_prism_psy_2016} In particular, our BIPP theorems for singular events extend CBI significantly in several ways. First, BIPP operates in the continuous domain for a Poisson process, while previous CBI theorems are applicable to Bernoulli processes in the discrete domain. As such, BIPP enables the runtime quantitative verification of interval CTMCs, and thus the analysis of important properties that are not captured by discrete-time Markov models. Second, CBI is one-side (upper) bounded, and therefore 
	only supports the analysis of undesirable singular events (e.g., catastrophic failures). In contrast, BIPP provides two-sided bounded estimates, therefore also enabling the analysis of ``positive'' singular events (e.g., the completion of difficult one-off tasks). Finally, BIPP can operate with any  \textit{arbitrary} number of confidence bounds as priors, which greatly increases the flexibility of exploiting different types of prior knowledge. 
	
	As illustrated by its application to an AUV infrastructure maintenance mission, our robust quantitative verification framework removes the need for 
	%
	%
	precise prior beliefs, which are typically unavailable in many real-world verification tasks that require Bayesian inference. Instead, the framework enables the exploitation of Bayesian combinations of partial or imperfect prior knowledge, which it uses to derive informed estimation errors (i.e., intervals) for the predicted model parameters. Combined with existing techniques for obtaining this prior knowledge, e.g., the Delphi method and its variants~\cite{ishikawa1993max} or reference class forecasting~\cite{flyvbjerg2008curbing}, the framework increases the trustworthiness of Bayesian inference in highly uncertain scenarios such as those encountered in the verification of autonomous robots.
	
	Based on recent survey papers {\cite{araujo_testing_2023,luckcuck_formal_2019,gleirscher_new_2019}} that provide in-depth discussions on the challenges and opportunities in the field of autonomous robot verification, it has become evident that a common taxonomy emerges, primarily revolving around two key dimensions. The first dimension centres on the specification of properties under verification, which includes various types of temporal logic languages.~{\cite{araujo_testing_2023}} The second dimension pertains to how system behaviours are modeled/structured. In this regard, formal models such as Belief Desire Intention, Petri Nets, and finite state machines, along with their diverse extensions, have emerged as popular approaches to capturing the intricate dynamics of autonomous systems. Our approach falls within the category of methods utilising CSL and CTMCs for the verification of robots. However, unlike the existing methods from this category~{\cite{gerasimou_undersea_2017,younes_numerical_2006}}, we introduced treatments of the model parameters uncertainty via robust Bayesian learning methods, and integrated them with recent research on interval CMTC model checking.
	
	Another important approach for verifying the behaviour of autonomous agents under uncertainty uses hidden Markov models (HMMs).~{\cite{zhang2005logic,hernandez_marimba_2015,WEI2002129}} HMM-based verification supports the analysis of stochastic systems whose true state is not observable, and can only be estimated (with aleatoric uncertainty given by a predefined probability distribution) through monitoring a separate process whose observable state depends on the unknown state of the system. In contrast, our verification framework supports the analysis of autonomous agents whose true state is observable but for which the rates of transition between these known states are affected by epistemic uncertainty and need to be learnt from system observations (as shown in Figure~\ref{fig:integration}). As such, HMM-based verification and our robust verification framework differ substantially by tackling different types of autonomous agent uncertainty. Because autonomous agents may be affected by both types of uncertainty, the complementarity of the two verification approaches can actually be leveraged by using our BIPP and IPSP Bayesian estimators in conjunction with HMM-based verification, i.e., to learn the transition rates associated with continuous-time HMMs that model the behaviour of an autonomous agent. Nevertheless, achieving this integration will first require the development of generic continuous-time HMM verification techniques since, to the best of our knowledge, only verification techniques and tools for the verification of discrete-time HMMs are currently available.
	
	\if 0
	{While our work uses Bayesian estimators to estimate CTMCs parameters based on observations at runtime which may seem similar to Hidden Markov Models (HMMs), it is important to note the distinctions between our approach and the verification framework based on Hidden Markov Models, as follows.
		The work in {\cite{zhang2005logic}} (and its tool paper {\cite{hernandez_marimba_2015}}) appears to be the first doing formal verification based on discrete-time HMMs. To the best of our knowledge, there is no formal verification framework based on \textit{continuous-time} HMM, although there is a significant body of research that uses continuous-time HMM to evaluate/calculate system performance in a case-by-case manner, e.g., {\cite{WEI2002129}}. Even in evaluation works using continuous-time HMM, we are not aware of any Bayesian estimators that yield interval estimates according to the two types of real-world events (regular and one-off) at runtime. More precisely, our IPSP estimator is the first one incorporating imprecise prior knowledge for regular events, and the BIPP estimator is the first that incorporates partial prior knowledge for rare events.
		Taken together, by focusing on learning the transition rates between states, our Bayesian estimators are orthogonal to the modelling paradigm and, therefore, can equally be used in conjunction with CTMC-based verification and continuous-time HMM-enabled verification.}
	\fi
	
	Although our method demonstrates promising potential, it is not without limitations. One limitation is scalability---as the complexity of the robot's behaviour and the environment grow, the number of unknown parameters to be estimated at runtime may increase, leading to increased computational overheads for our Bayesian estimators. Additionally, the method requires a certain level of expertise to construct the underlying CTMC model structure. This demands understanding both the robot's dynamics and the environment in order to model them as a CTMC, making the approach less accessible to those without specialised knowledge. Last but not least, a challenge inherent to all Bayesian methods involves the acquisition of appropriate priors. While our robust Bayesian estimators mitigate this issue by eliminating the need for complete and precise prior knowledge, establishing the required partial and vague priors can still pose challenges. These limitations suggest important areas for future work.
	
	\section{Methods}

	\medskip
	\hypertarget{bippProof}{}
	\noindent
	\subsection{BIPP estimator proofs.} To prove Theorem~\ref{theorem_BIPP_bounds}, we require the following preliminary results.
	
	\begin{lemma}
		\label{lemma_concave}
		If $l(\cdot)$ is the likelihood function defined in \eqref{eq_likelihood_f}, then $g:(0,\infty)\rightarrow \mathbb{R}$, $g(w)=w \cdot l^{-1}(w)$ is a concave function.
	\end{lemma}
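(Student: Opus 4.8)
The plan is to obtain an explicit closed form for $g$ and then verify concavity directly via its second derivative. Since the singular-event likelihood is $l(\lambda)=e^{-\lambda t}$ with $t>0$, it is a strictly decreasing smooth bijection from $(0,\infty)$ onto $(0,1)$, so its inverse exists. First I would invert the relation $w=e^{-\lambda t}$ by taking logarithms to get $l^{-1}(w)=-\tfrac{1}{t}\ln w$, which is well defined and infinitely differentiable for every $w>0$.

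Substituting this into the definition of $g$ gives the closed form $g(w)=w\,l^{-1}(w)=-\tfrac{1}{t}\,w\ln w$. With this in hand, the remaining step is a one-line calculus check: differentiating twice yields $g'(w)=-\tfrac{1}{t}(\ln w+1)$ and $g''(w)=-\tfrac{1}{tw}$. Because $t>0$ and $w>0$, we have $g''(w)<0$ throughout $(0,\infty)$, and a function whose second derivative is everywhere negative on an interval is concave, which closes the argument.

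I do not expect a genuine obstacle here; the only point requiring a word of care is the domain claimed in the statement, namely all of $(0,\infty)$. Although the range of $l$ is only $(0,1)$, the algebraic expression $-\tfrac{1}{t}\ln w$ for $l^{-1}$ extends naturally to every $w>0$, and the second-derivative computation remains valid on the whole half-line, so the concavity holds as stated. As a sanity check one may note that $g$ is just the negative of $w\mapsto\tfrac{1}{t}\,w\ln w$, a positive multiple of the standard strictly convex map $w\mapsto w\ln w$, so its negation is concave; the direct second-derivative computation is, however, the cleanest route and is the one I would record in the proof.
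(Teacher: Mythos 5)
Your proof is correct and follows essentially the same route as the paper's: invert $l(\lambda)=e^{-\lambda t}$ to obtain the closed form $g(w)=-\tfrac{1}{t}\,w\ln w$, then verify $g''(w)=-\tfrac{1}{tw}<0$ on $(0,\infty)$. Your additional remark about extending $l^{-1}$ beyond the range $(0,1)$ of $l$ is a nicety the paper's proof silently assumes, but it does not change the argument.
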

	\begin{proof}
		Since $g(w)=w\cdot \left(-\frac{\ln w}{t}\right)$ and $t>0$, the second derivative of $g$ satisfies 
		\begin{align}
		\frac{d^2 g}{dw^2}=\frac{d}{dw}\left[-\frac{\ln w}{t}-\frac{1}{t}\right]=-\frac{1}{wt}<0.
		\end{align}
		Thus, $g(w)$ is concave.
	\end{proof}

	\begin{proposition} 
		\label{prop_BIPP_m_point_for_upper_bound}
		With the notation from Theorem~\ref{theorem_BIPP_bounds}, there exist $m$ values $\lambda_1\!\in\!(\epsilon_0,\epsilon_1]$, $\lambda_2\!\in\!(\epsilon_1,\epsilon_2]$, \ldots, $\lambda_m\!\in\!(\epsilon_{m-1},\epsilon_m]$ such that $\sup\; {\mathcal S}_\lambda$ is the posterior estimate~(\ref{eq_posterior_mean_lambda}) obtained by using as prior the $m$-point discrete distribution with probability mass $f(\lambda_i)=\mathit{Pr}(\lambda=\lambda_i)=\theta_i$ for $i=1,2,\ldots,m$. 
	\end{proposition}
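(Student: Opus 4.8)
The plan is to reduce the infinite-dimensional optimisation over all priors satisfying \eqref{eq_prior_constraints} to a finite-dimensional one, by showing that the supremum of the posterior estimate is attained when all the mass $\theta_i$ of each interval is concentrated at a single point. First I would note that every feasible prior can be written as a mixture $f=\sum_{i=1}^m \theta_i g_i$, where each $g_i$ is a probability measure supported on $(\epsilon_{i-1},\epsilon_i]$, since the constraints \eqref{eq_prior_constraints} fix only the total mass $\theta_i$ in each interval and leave its internal distribution free. Substituting this mixture into \eqref{eq_posterior_mean_lambda} with $n=0$ (so that $l(\lambda)=e^{-\lambda t}$), the posterior estimate takes the form of a ratio
\begin{equation*}
\frac{\sum_{i=1}^m \theta_i a_i}{\sum_{i=1}^m \theta_i b_i},\qquad a_i=\int \lambda\, l(\lambda)\diff g_i,\quad b_i=\int l(\lambda)\diff g_i .
\end{equation*}

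Next I would change variables via $w=l(\lambda)$, so that $\lambda\, l(\lambda)=w\cdot l^{-1}(w)=g(w)$ is exactly the concave function from Lemma~\ref{lemma_concave}. Writing $\mu_i$ for the pushforward of $g_i$ under $l$, we obtain $b_i=\E_{\mu_i}[w]$ and $a_i=\E_{\mu_i}[g(w)]$, so Jensen's inequality for the concave $g$ gives $a_i\leq g(b_i)$, with equality precisely when $\mu_i$ (and hence $g_i$) is a point mass. The crux is then an improvement argument: replacing $g_i$ by the point mass at $\lambda_i=l^{-1}(b_i)$ leaves $b_i$ unchanged but raises $a_i$ to $g(b_i)$, so the denominator of the ratio is preserved while the numerator cannot decrease, and therefore the whole ratio does not decrease. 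Since $l$ is strictly decreasing, for $\lambda\in(\epsilon_{i-1},\epsilon_i]$ we have $w\in[l(\epsilon_i),l(\epsilon_{i-1}))$, whence $b_i$ lies in this range and $\lambda_i=l^{-1}(b_i)\in(\epsilon_{i-1},\epsilon_i]$; the replacement thus stays within the prescribed interval.

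Performing this replacement in every interval shows that each feasible prior is dominated by some $m$-point prior placing mass $\theta_i$ at a single $\lambda_i\in(\epsilon_{i-1},\epsilon_i]$. Because such point-mass priors are themselves feasible, it follows that $\sup\mathcal{S}_\lambda$ equals the supremum of $\frac{\sum_i \lambda_i l(\lambda_i)\theta_i}{\sum_i l(\lambda_i)\theta_i}$ over all admissible configurations $(\lambda_1,\dots,\lambda_m)$, which is precisely the expression in \eqref{eq_reachable_BIPP_upper_bound}. Finally I would confirm that this supremum is attained: the ratio is continuous in $(\lambda_1,\dots,\lambda_m)$ on the compact box $\prod_i[\epsilon_{i-1},\epsilon_i]$, so a maximiser exists. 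I expect the main obstacle to be the careful treatment of this last step, since the intervals in \eqref{eq_prior_constraints} are half-open: I must check that a maximiser can be chosen with each $\lambda_i>\epsilon_{i-1}$ — a boundary value $\lambda_i=\epsilon_{i-1}$ can be re-assigned to the adjacent interval $i-1$, where it is admissible — and that point masses placed at interval endpoints are legitimate limits of feasible priors.
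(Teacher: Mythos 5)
Your proposal is correct and follows essentially the same route as the paper: both reduce the optimisation over all feasible priors to $m$-point priors by applying Jensen's inequality to the concave map $g(w)=w\,l^{-1}(w)$ of Lemma~\ref{lemma_concave} after the change of variables $w=l(\lambda)$, and your mixture/pushforward packaging merely replaces the paper's invocation of the first mean value theorem for integrals (your $\lambda_i=l^{-1}(b_i)$ is exactly the paper's mean-value point, and your direct argument that $b_i\in[l(\epsilon_i),l(\epsilon_{i-1}))$ is in fact a cleaner justification that $\lambda_i$ lands in the half-open interval). One remark: your closing fix for the endpoint case $\lambda_i=\epsilon_{i-1}$ does not actually restore feasibility---mass placed at $\epsilon_{i-1}$ counts toward the constraint for interval $i-1$, so \eqref{eq_prior_constraints} is then violated for both intervals $i-1$ and $i$---but the paper's proof does no better here, explicitly conceding that in this case one obtains a supremum rather than an attained maximum of $\mathcal{S}_\lambda$.
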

	\begin{proof}
		\label{proof_theorem_BIPP_m_point_for_upper_bound}
		Since $f(\lambda)=0$ for $\lambda\notin [\epsilon_0,\epsilon_m]$, the Lebesgue-Stieltjes integration from the objective function \eqref{eq_posterior_mean_lambda} can be rewritten as:
		\begin{equation}
		\E(\Lambda \mid \text{data})
		=\frac{\sum\limits_{i=1}^{m}\int_{\epsilon_{i-1}}^{\epsilon_i}\lambda l(\lambda)  f(\lambda)\diff \lambda}{\sum\limits_{i=1}^{m}\int_{\epsilon_{i-1}}^{\epsilon_i} l(\lambda)  f(\lambda)\diff \lambda}
		\label{eq_OB_in_proof}
		\end{equation}
		The first mean value theorem for integrals (e.g.\ \cite[p.~249]{zwillinger_table_2015}) ensures that, for every $i=1,2,\ldots,m$,
		there are points $\lambda_i, \lambda'_{i} \in [\epsilon_{i-1},\epsilon_{i}]$ such that:
		\begin{align}
		\int_{\epsilon_{i-1}}^{\epsilon_i} l(\lambda)f(\lambda) \diff \lambda &=l(\lambda_i)\int_{\epsilon_{i-1}}^{\epsilon_i} f(\lambda) \diff \lambda= l(\lambda_i) \theta_i
		\label{eq_mean_val_theory1}
		\\
		\int_{\epsilon_{i-1}}^{\epsilon_i} \lambda l(\lambda)f(\lambda) \diff \lambda &=\lambda'_{i}l(\lambda'_{i})\int_{\epsilon_{i-1}}^{\epsilon_i} f(\lambda) \diff \lambda=\lambda'_{i} l(\lambda'_{i}) \theta_i
		\label{eq_mean_val_theory2}
		\end{align}
		or, after simple algebraic manipulations of the previous results, 
		\begin{align}
		l(\lambda_i)&=\E[l(\Lambda) \mid \epsilon_{i-1}\leq \Lambda \leq \epsilon_i ]
		\\
		\lambda'_{i} l(\lambda'_{i})&=\E[\Lambda \cdot l(\Lambda) \mid \epsilon_{i-1}\leq \Lambda \leq \epsilon_i ]
		\label{eq_temp_mark}
		\end{align}
		
		\noindent
		Using the shorthand notation $w=l(\lambda)$ for the likelihood function \eqref{eq_likelihood_f} (hence $w>0$), we define $g:(0,\infty)\rightarrow \mathbb{R}$, $g(w)=w \cdot l^{-1}(w)$. According to Lemma~\ref{lemma_concave}, $g(\cdot)$ is a concave function, and thus we have:
		\begin{align}
		\lambda'_{i} l(\lambda'_{i}) 
		& = \E[\Lambda \cdot l(\Lambda) \mid \epsilon_{i-1}\leq \Lambda \leq \epsilon_i ] \nonumber
		\\
		&=\E[ W \cdot l^{-1}(W) \mid \epsilon_{i-1} \leq l^{-1}(W) \leq \epsilon_i] \nonumber
		\\
		&=\E[g(W) \mid \epsilon_{i-1} \leq l^{-1}(W) \leq \epsilon_i] \nonumber
		\\
		&\leq g\left(\E[ W \mid\epsilon_{i-1} \leq l^{-1}(W) \leq \epsilon_i]\right) \label{eq_Jensen_inequality}
		\\
		&= \E[ W\! \mid \epsilon_{i-1} \leq l^{-1}(W) \leq \epsilon_i] \cdot \nonumber\\
		& \hspace*{15mm} l^{-1}\!\left( \E[ W \!\mid\epsilon_{i-1} \leq l^{-1}(W) \leq \epsilon_i] \right)
		\nonumber
		\\
		&=\E[ l(\Lambda) \mid \epsilon_{i-1} \leq \Lambda \leq \epsilon_i] \cdot l^{-1}\left( \E[l(\Lambda) \mid \epsilon_{i-1} \leq \Lambda \leq \epsilon_i] \right) \nonumber
		\\
		&=l(\lambda_i) \cdot l^{-1}\left( l(\lambda_i)\right) \nonumber
		\\
		&=  \lambda_i \cdot l(\lambda_i),
		\label{eq_inequality}
		\end{align}
		where the inequality step \eqref{eq_Jensen_inequality} is obtained by applying Jensen's inequality.\cite{jensen_sur_1906,
			bishop_toward_2011}
		
		We can now use 
		\eqref{eq_mean_val_theory1}, \eqref{eq_mean_val_theory2} and \eqref{eq_inequality} to establish an upper bound for the objective function~\eqref{eq_OB_in_proof}: 
		
		\begin{align}
		\E(\Lambda \mid \text{data})
		&=\frac{\sum\limits_{i=1}^{m} \lambda'_{i}l(\lambda'_{i})\theta_i}
		{\sum\limits_{i=1}^{m} l(\lambda_{i})\theta_i} \leq \frac{\sum\limits_{i=1}^{m} \lambda_{i}l(\lambda_{i})\theta_i}
		{\sum\limits_{i=1}^{m} l(\lambda_{i})\theta_i}
		\label{eq_reachable_bound}
		\end{align}
		This upper bound 
		is attained 
		by selecting an $m$-point discrete distribution $f_u(\lambda)$ with probability mass $\theta_i$ at $\lambda=\lambda_i$, for $i=1,2,\ldots,m$ (since substituting $f(\cdot)$ from \eqref{eq_OB_in_proof} with this $f_u(\cdot)$ yields the rhs result of \eqref{eq_reachable_bound}). 
		As such, maximising this bound 
		reduces to an optimisation problem in the $m$-dimensional space of $(\lambda_1,\lambda_2,\ldots,\lambda_m)\in(\epsilon_0,\epsilon_1]\times(\epsilon_1,\epsilon_2]\times\cdots\times(\epsilon_{m-1},\epsilon_m]$.
		This optimisation problem can be solved numerically, 
		yielding a supremum (rather than a maximum) for $\mathcal{S}_\lambda$ in the case when the optimised prior distribution has points located at $\lambda_{i}=\epsilon_{i-1}$ for  $i=1,2,\ldots,m$.
	\end{proof}
	
	\begin{proposition}
		\label{prop_BIPP_m_point_for_lower_bound}
		With the notation from Theorem~\ref{theorem_BIPP_bounds}, there exist $m$ values $x_1,x_2,\ldots,x_m\in[0,1]$ such that $\inf \mathcal{S}_\lambda$ is the posterior estimate~(\ref{eq_posterior_mean_lambda}) obtained by using as prior the $(m+1)$-point discrete distribution with probability mass $f(\epsilon_0)=\mathit{Pr}(\lambda=\epsilon_0)=x_1\theta_1$,  $f(\epsilon_i)=\mathit{Pr}(\lambda=\epsilon_i)=(1-x_{i})\theta_{i}+x_{i+1}\theta_{i+1}$ for $1\leq i<m$, and $f(\epsilon_m)=\mathit{Pr}(\lambda=\epsilon_m)=(1-x_m)\theta_m$.
	\end{proposition}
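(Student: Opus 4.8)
The plan is to mirror the argument used for Proposition~\ref{prop_BIPP_m_point_for_upper_bound}, but to exploit the concavity of $g$ from Lemma~\ref{lemma_concave} through its \emph{chords} rather than through Jensen's inequality. As before, I would start from the decomposition \eqref{eq_OB_in_proof} of the objective \eqref{eq_posterior_mean_lambda} into a sum over the intervals $(\epsilon_{i-1},\epsilon_i]$ and apply the first mean value theorem for integrals to obtain points $\lambda_i,\lambda_i'\in[\epsilon_{i-1},\epsilon_i]$ with $\int_{\epsilon_{i-1}}^{\epsilon_i} l(\lambda)f(\lambda)\diff\lambda = l(\lambda_i)\theta_i$ and $\int_{\epsilon_{i-1}}^{\epsilon_i}\lambda l(\lambda)f(\lambda)\diff\lambda = \lambda_i' l(\lambda_i')\theta_i$, exactly as in \eqref{eq_mean_val_theory1}--\eqref{eq_mean_val_theory2}. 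Writing $W=l(\Lambda)$ and noting that $l$ is strictly decreasing, so that $W$ ranges over $[l(\epsilon_i),l(\epsilon_{i-1})]$ on the $i$-th interval, the numerator factor is again $\lambda_i' l(\lambda_i')=\E[g(W)\mid \epsilon_{i-1}\le\Lambda\le\epsilon_i]$.

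The key step, and the point of departure from the upper-bound proof, is that for concave $g$ the pointwise chord bound $g(w)\ge \frac{l(\epsilon_{i-1})-w}{l(\epsilon_{i-1})-l(\epsilon_i)}\,g(l(\epsilon_i)) + \frac{w-l(\epsilon_i)}{l(\epsilon_{i-1})-l(\epsilon_i)}\,g(l(\epsilon_{i-1}))$ holds for every $w\in[l(\epsilon_i),l(\epsilon_{i-1})]$; taking expectations and using $g(l(\epsilon_{i-1}))=\epsilon_{i-1}l(\epsilon_{i-1})$, $g(l(\epsilon_i))=\epsilon_i l(\epsilon_i)$ yields a \emph{lower} bound on $\lambda_i' l(\lambda_i')$ in terms of the two endpoint values. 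To turn this per-interval numerator bound into a bound on the global ratio, I would choose, for each $i$, the weight $x_i\in[0,1]$ so that the two-point distribution placing mass $x_i\theta_i$ at the right limit of $\epsilon_{i-1}$ and $(1-x_i)\theta_i$ at $\epsilon_i$ \emph{reproduces the denominator contribution exactly}, i.e.\ $x_i l(\epsilon_{i-1})+(1-x_i)l(\epsilon_i)=l(\lambda_i)$; this is solvable with $x_i=\frac{l(\lambda_i)-l(\epsilon_i)}{l(\epsilon_{i-1})-l(\epsilon_i)}\in[0,1]$, since $l(\lambda_i)=\E[W\mid\cdot]$ lies between the endpoint likelihoods. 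With this choice the chord bound reads $\lambda_i'l(\lambda_i')\ge x_i\,\epsilon_{i-1}l(\epsilon_{i-1})+(1-x_i)\,\epsilon_i l(\epsilon_i)$, while the denominator contribution is left unchanged.

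Summing over $i$ and using that decreasing each numerator term while holding every denominator term fixed can only decrease the ratio, I obtain, for every admissible prior $f$, that $\E[\Lambda\mid\text{data}]$ is at least $\frac{\sum_i\theta_i[x_i\epsilon_{i-1}l(\epsilon_{i-1})+(1-x_i)\epsilon_i l(\epsilon_i)]}{\sum_i\theta_i[x_i l(\epsilon_{i-1})+(1-x_i)l(\epsilon_i)]}$, and hence at least the infimum of this expression over $x_1,\ldots,x_m\in[0,1]$, namely $\lambda_l$ in \eqref{eq_reachable_BIPP_lower_bound}. Conversely, each two-point-per-interval prior is itself admissible---its mass on $(\epsilon_{i-1},\epsilon_i]$ equals $\theta_i$---so its posterior estimate lies in $\mathcal{S}_\lambda$; merging the shared endpoints yields exactly the $(m{+}1)$-point distribution in the statement, with $f(\epsilon_0)=x_1\theta_1$, $f(\epsilon_i)=(1-x_i)\theta_i+x_{i+1}\theta_{i+1}$ for $1\le i<m$, and $f(\epsilon_m)=(1-x_m)\theta_m$. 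The two inequalities together identify $\inf\mathcal{S}_\lambda$ with this family.

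I expect the main obstacle to be precisely this denominator-matching device: in the upper-bound proof Jensen's inequality automatically kept the denominator factor $l(\lambda_i)$ unchanged, whereas here the concave chord bound naturally alters both numerator and denominator, so one must engineer $x_i$ to hold each denominator contribution fixed before the elementary ``smaller numerators, equal denominators, smaller ratio'' comparison can be applied term by term. A secondary technical point is that, because the confidence bounds \eqref{eq_prior_constraints} use half-open intervals, placing mass at a left endpoint $\epsilon_{i-1}$ is only achievable as a limit of masses at $\epsilon_{i-1}+\delta$ with $\delta\to 0^+$; this is why the result is stated as an infimum rather than a minimum, matching the supremum-versus-maximum remark at the end of the proof of Proposition~\ref{prop_BIPP_m_point_for_upper_bound}.
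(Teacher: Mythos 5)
Your proof is correct and takes essentially the same route as the paper's: the same mean-value-theorem decomposition \eqref{eq_OB_in_proof}--\eqref{eq_mean_val_theory2}, the same per-interval lower bound (your pointwise chord inequality taken in expectation is exactly the Converse Jensen's Inequality that the paper cites), the same weights $x_i=\frac{l(\lambda_i)-l(\epsilon_i)}{l(\epsilon_{i-1})-l(\epsilon_i)}$, and the same merging of shared endpoint atoms into the $(m+1)$-point distribution, with attainment handled by the same infimum-versus-minimum observation. Your one loose claim---that each two-point-per-interval prior is admissible with mass $\theta_i$ on $(\epsilon_{i-1},\epsilon_i]$, which is false whenever some $x_i>0$ because an atom at the left endpoint $\epsilon_{i-1}$ does not lie in $(\epsilon_{i-1},\epsilon_i]$---is repaired by your own closing remark about approximating such atoms by masses at $\epsilon_{i-1}+\delta$ with $\delta\to 0^{+}$, which matches the paper's observation that admissible priors only realise $x_i<1$, so the bound is an infimum rather than a minimum.
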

	\begin{proof}
		We reuse the reasoning steps from Proposition~\ref{prop_BIPP_m_point_for_upper_bound} up to  inequality \eqref{eq_Jensen_inequality}, which we replace with the following alternative inequality derived from the Converse Jensen's Inequality \cite{lah_converse_1973,bakula_converse_2012} and the fact that $g(w)$ is a concave function (cf. Lemma \ref{lemma_concave}):
		\begin{align}
		\textrm{\hspace*{-2.7mm}}\lambda'_{i} l(\lambda'_{i}) 
		&=\E[g(W) \mid \epsilon_{i-1} \leq l^{-1}(W) \leq \epsilon_i] \nonumber
		\\
		&\geq \frac{l(\epsilon_{i-1})-\E[W \mid \epsilon_{i-1} \leq l^{-1}(W) \leq \epsilon_i ]}{l(\epsilon_{i-1})-l(\epsilon_i)}g(l(\epsilon_i))  \nonumber\\
		&\quad+\frac{\E[W \mid \epsilon_{i-1} \leq l^{-1}(W) \leq \epsilon_i]-l(\epsilon_i)}{l(\epsilon_{i-1})-l(\epsilon_i)}g(l(\epsilon_{i-1})) \nonumber
		\\
		&=\frac{l(\epsilon_{i-1})-l(\lambda_{i})}{l(\epsilon_{i-1})-l(\epsilon_i)}\epsilon_i l(\epsilon_i)+\frac{l(\lambda_{i})-l(\epsilon_i)}{l(\epsilon_{i-1})-l(\epsilon_i)}\epsilon_{i-1}l(\epsilon_{i-1})
		\label{eq_converse_inequality}
		\end{align}
		We can now establish a lower bound for~\eqref{eq_OB_in_proof}:

		\begin{align}
		\E(\Lambda \mid \text{data})
		&=\frac{\sum\limits_{i=1}^{m} \lambda'_{i}l(\lambda'_{i})\theta_i}
		{\sum\limits_{i=1}^{m} l(\lambda_{i})\theta_i} \nonumber
		\\
		&\hspace*{-12mm}
		\geq \frac{\sum\limits_{i=1}^{m} \left(\frac{l(\epsilon_{i-1})-l(\lambda_{i})}{l(\epsilon_{i-1})-l(\epsilon_i)}\epsilon_i l(\epsilon_i)+\frac{l(\lambda_{i})-l(\epsilon_i)}{l(\epsilon_{i-1})-l(\epsilon_i)}\epsilon_{i-1}l(\epsilon_{i-1})\right)\theta_i}
		{\sum\limits_{i=1}^{m} l(\lambda_{i})\theta_i}
		\\
		&=\frac{\sum\limits_{i=1}^{m} \left[\epsilon_i l(\epsilon_i)(1-x_i)\theta_i+\epsilon_{i-1}l(\epsilon_{i-1})x_i\theta_i\right]}
		{\sum\limits_{i=1}^{m} [l(\epsilon_{i})(1-x_i)\theta_i+l(\epsilon_{i-1})x_i\theta_i]}
		\label{eq_reachable_lower_bound}
		\end{align}
		where $x_i$ is defined as:
		\begin{equation}
		x_i=\frac{l(\lambda_{i})-l(\epsilon_i)}{l(\epsilon_{i-1})-l(\epsilon_i)}
		\label{eq_ri_def}
		\end{equation}
		
		The result \eqref{eq_reachable_lower_bound} is essentially in the same form as the result obtained by using a $2m$-point distribution in which, for each interval $[\epsilon_{i-1},\epsilon_{i}]$, there are two points located at $\lambda=\epsilon_{i-1}$ and $\lambda=\epsilon_{i}$ and the probability mass associated with these points is $x_i\theta_i$ and $(1-x_i)\theta_i$ respectively. Intuitively, $x_i$ is the ratio of splitting the probability mass $\theta_i$ between the two points since, according to~\eqref{eq_ri_def},  $x_i \in [0,1]$.
		
		Furthermore, the points on the boundaries of two successive intervals are overlapping, which effectively reduces the number of points from $2m$ to $m+1$. Expanding~\eqref{eq_reachable_lower_bound} yields an $(m+1)$-point discrete distribution $f_l(\lambda)$ with probability mass $f_l(\epsilon_0)=x_1\theta_1$,  $f_l(\epsilon_i)=(1-x_{i})\theta_{i}+x_{i+1}\theta_{i+1}$ for $1\leq i<m$ and $f_l(\epsilon_m)=(1-x_m)\theta_m$. As such, minimising~\eqref{eq_reachable_lower_bound} reduces to an $m$-dimensional optimisation problem in $x_1,x_2,\ldots,x_m$, which can be solved numerically given other model parameters. Finally, since~\eqref{eq_prior_constraints} requires that $\epsilon_{i-1} < \lambda_i \leq \epsilon_i$, we have $0\leq x_i <1$, and thus the posterior estimate is an infimum (rather than a minimum) of $\mathcal{S}_\lambda$ when the solution of the optimisation problem corresponds to a combination of $x_1,x_2,\ldots,x_m$ values that includes one or more values of $1$.	
	\end{proof}
	
	\noindent We can now prove the main theoretical result from Section~1.2. In the supplementary material, we use this result to prove Corollaries~\ref{corollary_closed_form_BIPP_3} and~\ref{corollary_closed_form_BIPP_2}.
	
	\renewenvironment{proof}{\medskip\noindent{\bfseries Proof of Theorem~\ref{theorem_BIPP_bounds}.}}{\qed\smallskip}
	\begin{proof}
		Propositions~\ref{prop_BIPP_m_point_for_upper_bound} and~\ref{prop_BIPP_m_point_for_lower_bound} imply that the set of posterior estimates $\lambda$ over all priors that satisfy the constraints~\eqref{eq_prior_constraints} has:
		\begin{enumerate}
			\item the infinum $\lambda_l$ from~\eqref{eq_reachable_BIPP_lower_bound}, obtained by using the prior $f(\lambda)$ from Proposition~\ref{prop_BIPP_m_point_for_lower_bound} in~\eqref{eq_posterior_mean_lambda};
			\item the supremum $\lambda_u$ from~\eqref{eq_reachable_BIPP_upper_bound}, obtained by using the prior $f(\lambda)$ from Proposition~\ref{prop_BIPP_m_point_for_upper_bound} in \eqref{eq_posterior_mean_lambda}.
		\end{enumerate}
		
		\vspace*{-2mm}
	\end{proof}
	\renewenvironment{proof}{\medskip\noindent{Proof.}}{\qed\smallskip}

	\medskip
	\hypertarget{ipspProof}{}
	\noindent
	\subsection{IPSP estimator proofs.} A formal proof for the results from~\eqref{eq_post_lower_bound_rij} and~\eqref{eq_post_upper_bound_rij} is provided below.
	
	\renewenvironment{proof}{\medskip\noindent{\bfseries Proof of Theorem~\ref{theorem_CTMC_IPSP}.}}{\qed\smallskip}
	\begin{proof}
		To find the extrema for the posterior rate $\lambda^{(t)}$, we first differentiate~\eqref{eq_posterior_rate_gamma} with respect to $\lambda^{(0)}$: 
		\[
		\frac{d}{d \lambda^{(0)}} \left(\lambda^{(t)}\right) = \frac{t^{(0)}}{t+t^{(0)}}.
		\]
		As $t^{(0)}>0$ and $t>0$, this derivative is always positive, so \begin{equation}
		\underline{\lambda}^{(t)}=\min_{t^{(0)}\in[\underline{t}^{(0)},\overline{t}^{(0)}]} \frac{t^{(0)}\underline{\lambda}^{(0)}+n}{t^{(0)}+t}
		\label{eq:th2-step1a}
		\end{equation}
		and 
		\begin{equation}
		\overline{\lambda}^{(t)}=\max_{t^{(0)}\in[\underline{t}^{(0)},\overline{t}^{(0)}]} \frac{t^{(0)}\overline{\lambda}^{(0)}+n}{t^{(0)}+t}.
		\label{eq:th2-step1b}
		\end{equation}
		We now differentiate the quantity that needs to be minimised in~\eqref{eq:th2-step1a} with respect to $t^{(0)}$:
		\[
		\frac{d}{d t^{(0)}} \left(\frac{t^{(0)}\underline{\lambda}^{(0)}+n}{t^{(0)}+t}\right) = \frac{\underline{\lambda}^{(0)}(t^{(0)}+t)-(t^{(0)}\underline{\lambda}^{(0)}+n)\cdot 1}{(t^{(0)}+t)^2} = \frac{\underline{\lambda}^{(0)}t-n}{(t^{(0)}+t)^2},
		\]
		As this derivative is non-positive for $\underline{\lambda}^{(0)}\in\left(0,\frac{n}{t}\right]$ and positive for $\underline{\lambda}^{(0)}>\frac{n}{t}$, the minimum from~\eqref{eq:th2-step1a} is attained for $t^{0}=\overline{t}^{(0)}$ in the former case, and for $t^{0}=\underline{t}^{(0)}$ in the latter case, which yields the result from~\eqref{eq_post_lower_bound_rij}.
		Similarly, the derivative of the quantity to maximise in~\eqref{eq:th2-step1b}, i.e., 
		\[
		\frac{d}{d t^{(0)}} \left(\frac{t^{(0)}\overline{\lambda}^{(0)}+n}{t^{(0)}+t}\right) = \frac{\overline{\lambda}^{(0)}t-n}{(t^{(0)}+t)^2},
		\]
		is non-positive for $\overline{\lambda}^{(0)}\in\left(0,\frac{n}{t}\right]$ and positive for $\overline{\lambda}^{(0)}>\frac{n}{t}$, so the maximum from~\eqref{eq:th2-step1b} is attained for $t^{0}=\underline{t}^{(0)}$ in the former case, and for $t^{0}=\overline{t}^{(0)}$ in the latter case, which yields the result from~\eqref{eq_post_upper_bound_rij} and completes the proof.
	\end{proof}
	\renewenvironment{proof}{\medskip\noindent{Proof.}}{\qed\smallskip}

	
	\medskip
	\noindent
	\subsection{BIPP estimator evaluation.} 
	Fig.~\ref{fig:bippResults} shows the results of experiments we carried out to evaluate the BIPP estimator in scenarios with $m=3$ (Figs.~\ref{fig:BIPPResultsA}--\ref{fig:BIPPResultsC}) and $m=2$ (Fig.~\ref{fig:BIPPResultsD}) confidence bounds by varying the characteristics of the partial prior knowledge. 
	For $m=3$, the upper bound computed by the estimator exhibits a three-stage behaviour as the time over which no singular event occurs increases. These stages correspond to the three $\lambda_u$ regions from \eqref{eq:closed-form-upper-bound}. They start with a steep $\lambda_u$ decrease for $t < \frac{1}{\epsilon_2}$ in stage~1, followed by a slower $\lambda_u$ decreasing trend for $\frac{1}{\epsilon_2} \leq t \leq \frac{1}{\epsilon_1}$ in stage~2, and approaching the asymptotic value $\frac{\epsilon_1(\theta_1+\theta_2)}{\theta_1}$ as the mission progresses through stage~3.  
	Similarly, the lower bound $\lambda_l$ demonstrates a two-stage behaviour, as expected given its two-part definition~\eqref{eq:closed-form-lower-bound}, 
	with the overall value approaching 0 as the mission continues and no singular event modelled by this estimator (e.g., a catastrophic failure) occurs.

	Fig.~\ref{fig:BIPPResultsA} shows the behaviour of the estimator for different $\theta_1$ values and fixed $\theta_2$, $\epsilon_1$ and $\epsilon_2$ values. 
	For  higher $\theta_1$ values, more probability mass is allocated to the confidence bound  $(\epsilon_0, \epsilon_1]$, yielding a steeper decrease in the upper bound $\lambda_u$ and a lower $\lambda_u$ value at the end of the mission.  
	The lower bound $\lambda_l$ presents limited variability across the different $\theta_1$ values, becoming almost constant and close to $0$ as $\theta_1$ increases.

	A similar decreasing pattern is observed in Fig.~\ref{fig:BIPPResultsB}, which depicts the results of experiments with $\theta_1, \epsilon_1$ and $\epsilon_{2}$ fixed, and $\theta_2$ variable.
	The upper bound $\lambda_u$ in the long-term is larger for higher $\theta_2$ values, resulting in a wider posterior estimate bound as $\lambda_u$ converges towards its theoretical asymptotic value.

	Allocating the same probability mass to the confidence bounds, i.e., $\theta_1 = \theta_2 = 0.3$ and changing the prior knowledge bounds $\epsilon_1$ and $\epsilon_2$ affects greatly the behaviour of the BIPP estimator (Fig.~\ref{fig:BIPPResultsC}). 
	When $\epsilon_1$ and $\epsilon_2$ have relatively high values compared to the duration of the mission (e.g., see the first three plots in Fig~\ref{fig:BIPPResultsC}), the upper bound $\lambda_u$ of the BIPP estimator rapidly converges to its asymptotic value, leaving no room for subsequent improvement as the mission progresses. 
	Similarly, the earlier the triggering point for switching between the two parts of the lower bound $\lambda_l$ calculation~\eqref{eq:closed-form-lower-bound}, the earlier $\lambda_l$ reaches a plateau close to 0.

	Finally, Fig.~\ref{fig:BIPPResultsD} shows experimental results for the special scenario comprising only $m=2$ confidence bounds. In this scenario, replacing $\theta_2=0$ in~\eqref{eq:closed-form-lower-bound} as required by Corollary~\ref{corollary_closed_form_BIPP_2} gives a constant lower bound $\lambda_l = 0$ irrespective of the other BIPP estimator parameters. 
	As expected, the upper bound $\lambda_u$ demonstrates a twofold behaviour, featuring a rapid decrease until $t=\frac{1}{\epsilon1}$, followed by a steady state behaviour where $\lambda_u = \frac{\epsilon_1}{\theta_1}$. 

	\begin{figure*}[t!]
		\centering
		\begin{subfigure}[!ht]{\textwidth}
			\centering
			\includegraphics[width=\textwidth]{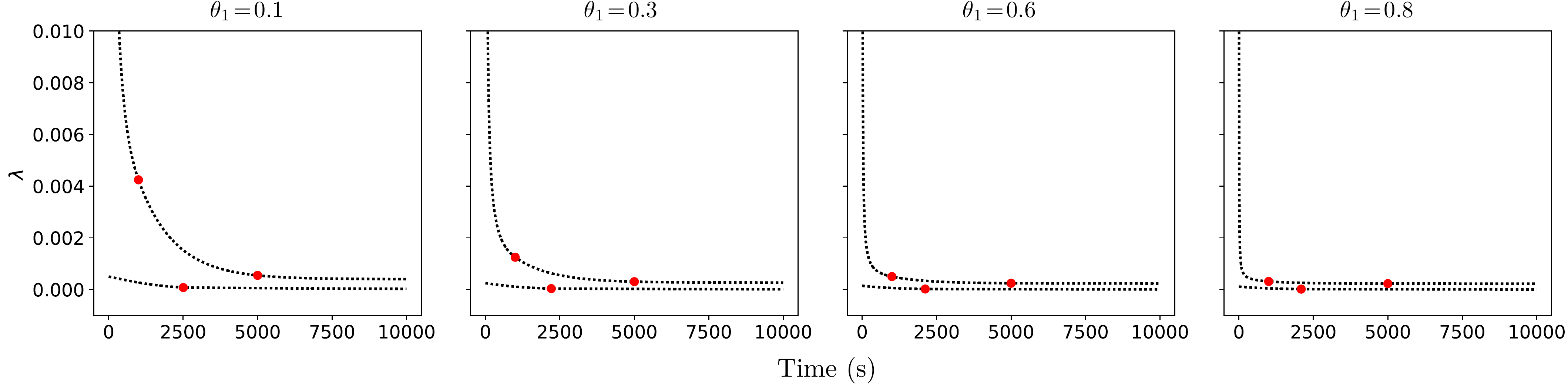}
			\vspace*{-5mm}
			\caption{BIPP estimator for $m=3$, $\theta_1  \in \{0.1, 0.3, 0.6, 0.8\}$, $\theta_2=0.1$, $\epsilon_1=\frac{1}{5000}$, $\epsilon_2=\frac{1}{1000}$
			}
			\label{fig:BIPPResultsA}
		\end{subfigure}
		\hfill
		\vspace*{4mm}
		\begin{subfigure}[!ht]{\textwidth}
			\centering
			\includegraphics[width=\textwidth]{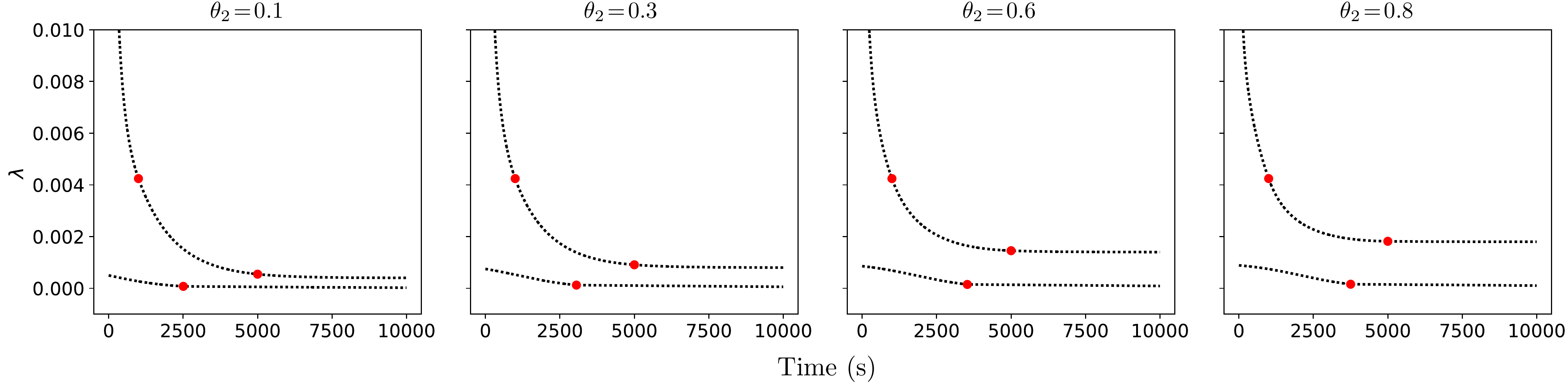}
			\vspace*{-5mm}
			\caption{BIPP estimator for $m=3$, $\theta_1=0.1$, $\theta_2 \in \{0.1, 0.3, 0.6, 0.8\}$, $\epsilon_1=\frac{1}{5000}$, $\epsilon_2=\frac{1}{1000}$
			}
			\label{fig:BIPPResultsB}
		\end{subfigure}
		\hfill
		\vspace*{4mm}
		\begin{subfigure}[!ht]{\textwidth}
			\centering
			\includegraphics[width=\textwidth]{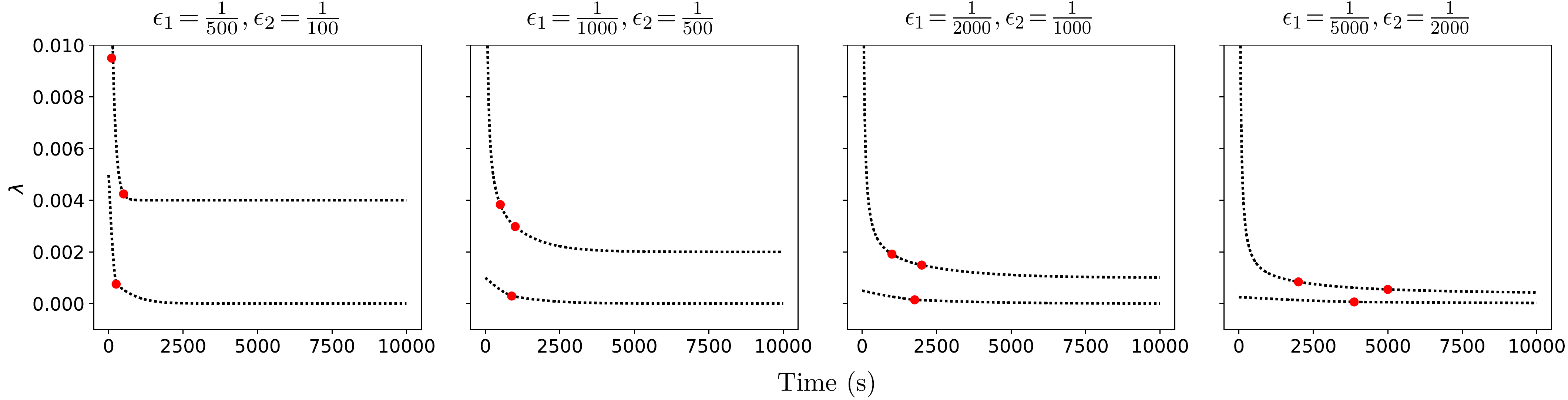}
			\vspace*{-5mm}
			\caption{BIPP estimator for $m=3$, $\theta_1=0.3$, $\theta_2=0.3$, $\left(\epsilon_1,\epsilon_2\right)  \in \left\{\left(\frac{1}{500},\frac{1}{100}\right),\left(\frac{1}{1000},\frac{1}{500}\right),\left(\frac{1}{2000},\frac{1}{1000}\right), \left(\frac{1}{5000},\frac{1}{2000}\right)\right\}$
			}
			\label{fig:BIPPResultsC}
		\end{subfigure}
		\hfill
		\vspace*{4mm}
		\begin{subfigure}[!ht]{\textwidth}
			\centering
			\includegraphics[width=\textwidth]{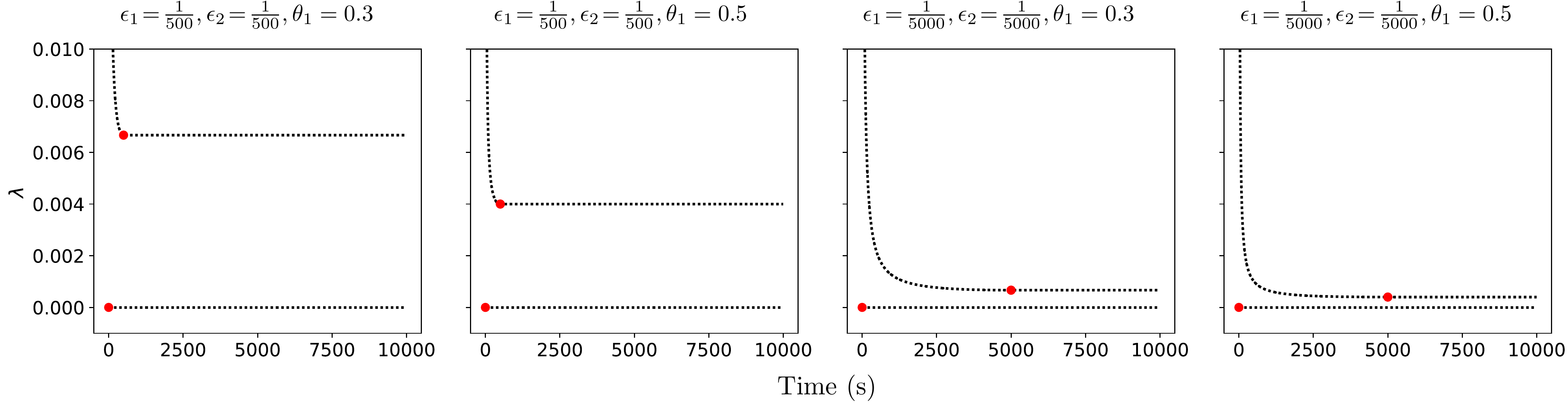}
			\vspace*{-5mm}
			\caption{BIPP estimator for $m=2$, $\theta_1 \in \{0.3,0.5\}$, $\theta_2=0$, $\left(\epsilon_1,\epsilon_2\right)  \in \left\{\left(\frac{1}{500},\frac{1}{500}\right),\left(\frac{1}{5000},\frac{1}{5000}\right)\right\}$
			}
			\label{fig:BIPPResultsD}
		\end{subfigure}
		\caption{\textbf{Experimental analysis of the Bayesian inference using partial priors (BIPP) estimator.} Systematic experimental analysis of the BIPP estimator showing the bounds $\lambda_l$ and $\lambda_u$ of the posterior estimates for the occurrence probability of singular events for the duration of a mission. 
			Each plot shows the effect of different partial prior knowledge encoded in \eqref{eq_prior_constraints} on the calculation of the lower \eqref{eq_reachable_BIPP_lower_bound} and upper \eqref{eq_reachable_BIPP_upper_bound} posterior estimate bounds. The red circles indicate the time points when the different formulae for the lower and upper bounds in \eqref{eq:closed-form-lower-bound} and \eqref{eq:closed-form-upper-bound}, respectively, become active. 
		}
		\label{fig:bippResults}
	\end{figure*}

	\begin{figure*}[t!]
		\centering
		\begin{subfigure}[!ht]{\textwidth}
			\centering
			\includegraphics[width=\textwidth]{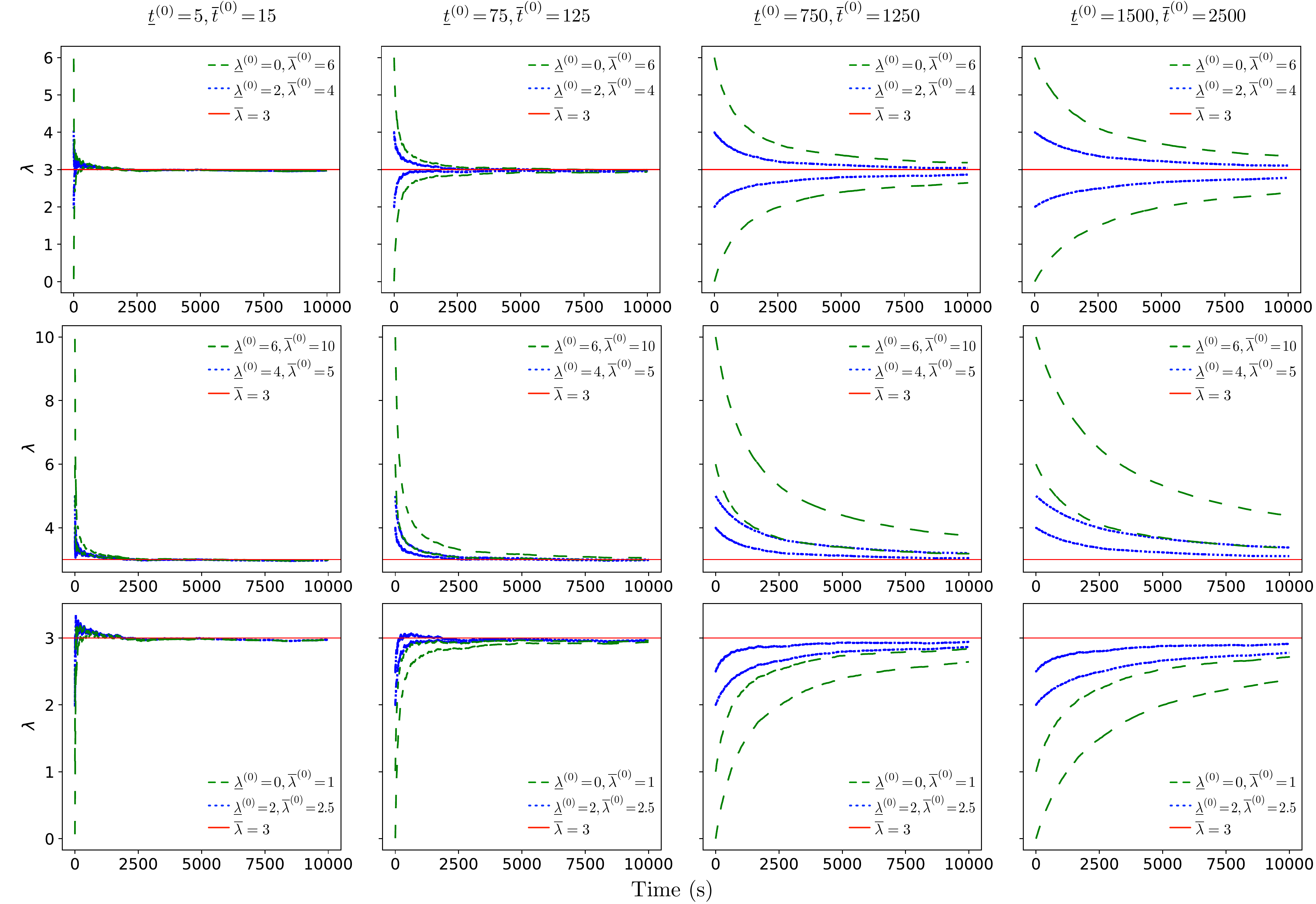}
			\caption{
				IPSP estimator results showing the impact of different sets of priors $[\underline{t}^{(0)},\overline{t}\textrm{}^{(0)}]$ and $[\underline{\lambda}^{(0)},\overline{\lambda}\textrm{}^{(0)}]$. In each plot, the blue dotted line (\textcolor{blue}{$\cdots\cdot$}) and green dashed line ({\textcolor{darkgreen}{$---$}}) show the posterior estimation bounds $\underline{\lambda}^{(t)}$ and $\overline{\lambda}\textrm{}^{(t)}$ for narrow and wide $[\underline{\lambda}^{(0)},\overline{\lambda}\textrm{}^{(0)}]$ intervals, respectively. 
				Each column of plots corresponds to assigning different strength to the prior knowledge, ranging from uninformative (leftmost column) to strong belief (rightmost column). 
				The first row shows scenarios in which the actual rate $\overline{\lambda}=3$ belongs to the prior knowledge interval $[\underline{\lambda}^{(0)},\overline{\lambda}\textrm{}^{(0)}]$. 
				In the second and third rows, the prior intervals overestimate and underestimate $\overline{\lambda}$, respectively. }
			\label{fig:ipspResultsA}
		\end{subfigure}
		\hfill
		\vspace*{4mm}
		\begin{subfigure}[!ht]{\textwidth}
			\centering
			\includegraphics[width=\textwidth]{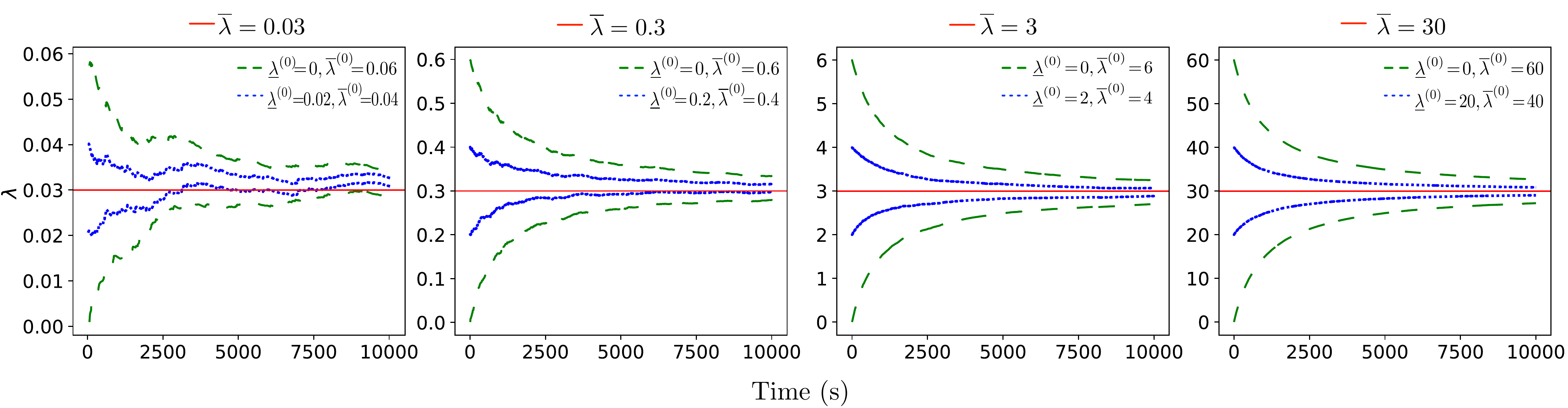}
			\caption{IPSP estimator results illustrating the behaviour of IPSP across different actual rate values 
				$\overline{\lambda} \in \{0.03, 0.3, 3, 30\}$. The experiments were carried out for $[\underline{t}^{(0)},\overline{t}\textrm{}^{(0)}] = [1000, 1000]$ and included both narrow and wide $[\underline{\lambda}^{(0)},\overline{\lambda}\textrm{}^{(0)}]$ intervals, which are shown in blue dotted lines (\textcolor{blue}{$\cdots\cdot$}) and green dashed lines (\textcolor{darkgreen}{$---$}), respectively. In all experiments, the unknown actual rate $\overline{\lambda}$ was in the prior interval $[\underline{\lambda}^{(0)},\overline{\lambda}\textrm{}^{(0)}]$.
			}
			\label{fig:ipspResultsb}
		\end{subfigure}
		\caption{\textbf{Experimental analysis of the Bayesian inference using imprecise probability with sets of priors (IPSP) estimator. } Systematic experimental analysis of the IPSP estimator showing the bounded posterior estimators for regular events. }
		\label{fig:ipspResults}
	\end{figure*}

	\medskip
	\noindent
	\subsection{IPSP estimator evaluation.} 
	Fig.~\ref{fig:ipspResults} shows the results of experiments we performed to analyse the behaviour of the IPSP estimator in scenarios with varying ranges for the prior knowledge
	$[\underline{t}^{(0)},\overline{t}\textrm{}^{(0)}]$ and $[\underline{\lambda}^{(0)},\overline{\lambda}\textrm{}^{(0)}]$.
	A general observation is that the posterior rate intervals $[\underline{\lambda}^{(t)},\overline{\lambda}^{(t)}]$ become narrower as the mission progresses, irrespective of the level of trust assigned to the prior knowledge, i.e., across all columns of plots (which correspond to different $[\underline{t}^{(0)},\overline{t}\textrm{}^{(0)}]$ intervals) from Fig.~\ref{fig:ipspResultsA}.
	Nevertheless, this trust level affects how the estimator incorporates observations into the calculation of the posterior interval. 
	When the trust in the prior knowledge is weak (in the plots from the leftmost columns of Fig.~\ref{fig:ipspResultsA}), the impact of the prior knowledge on the posterior estimation is low, and the IPSP calculation is heavily influenced by the observations, resulting in a narrow interval. 
	In contrast, when the trust in the prior knowledge is stronger (in the plots from the rightmost columns), the contribution of the prior knowledge to the posterior estimation becomes higher, and the IPSP estimator produces a wider interval.

	In the experiments from the first row of plots in Fig.~\ref{fig:ipspResultsA}, the (unknown) actual rate $\overline{\lambda}=3$ belongs to the prior knowledge interval 
	$[\underline{\lambda}^{(0)},\overline{\lambda}\textrm{}^{(0)}]$. As a result, the posterior rate interval $[\underline{\lambda}^{(t)},\overline{\lambda}\textrm{}^{(t)}]$  progressively becomes narrower, approximating $\overline{\lambda}$ with high accuracy.  
	As expected, the narrower prior knowledge (blue dotted line) produces a narrower posterior rate interval than the wider and more conservative prior knowledge (green dashed line). 
	
	When the prior knowledge interval $[\underline{\lambda}^{(0)},\overline{\lambda}\textrm{}^{(0)}]$ overestimates or underestimates the actual rate $\overline{\lambda}$ (second and third rows of plots from Fig.~\ref{fig:ipspResultsA}, respectively), the ability of IPSP to adapt its estimations to reflect the observations heavily depends on the characteristics of the sets of priors. 
	For example, if the width of the prior knowledge $[\underline{\lambda}^{(0)},\overline{\lambda}\textrm{}^{(0)}]$  is close to  $\overline{\lambda}$ and $t^{(0)}\ll t$, then IPSP more easily approaches $\overline{\lambda}$, as shown by the narrow prior knowledge (blue dotted line) in {Fig~\ref{fig:ipspResultsA}} for $[\underline{t}^{(0)},\overline{t}\textrm{}^{(0)}] \in \{[5,15], [75,125], [750,1250]\}$.
	In contrast,  wider narrow prior knowledge (green dashed line) combined with higher levels of trust in the prior, e.g., $[\underline{t}^{(0)},\overline{t}\textrm{}^{(0)}] \in \{[1500,2500]\}$, entails that more observations are needed for the posterior rate to approach the actual rate $\overline{\lambda}$. When the actual rate is, in addition, nonstationary, change-point detection methods can be employed to identify these changes~\cite{epifani_change_point_2010,zhao2020interval} and recalibrate the IPSP estimator. 
	Finally, Fig.~\ref{fig:ipspResultsb} shows the behaviour of IPSP for different actual rate $\overline{\lambda}$ values, i.e., $\overline{\lambda} \in \{0.03, 0.3, 3, 30\}$. 
	As $\overline{\lambda}$ increases, more observations are produced in the same time period, resulting in a smoother and narrower posterior bound estimate.

	\bigskip
	\noindent
	\textbf{Data availability}
	\ \\ \noindent
	The data supporting the RBV findings and a video of the robotic mission in simulation are available at~\url{https://gerasimou.github.io/RBV}.
	
	\bigskip
	\noindent
	\textbf{Code availability}
	\ \\ \noindent
	All code developed in this project is freely available at~\url{http://github.com/gerasimou/RBV}.

	

	
	
	%

	\balance
	\bibliographystyle{unsrt}
	\bibliography{ref.bib}

	\bigskip
	\noindent
	\textbf{Acknowledgments}\\
	This project has received funding from the ORCA-Hub PRF project `COVE', the Assuring Autonomy International Programme, the UKRI project EP/V026747/1 `Trustworthy Autonomous Systems Node in Resilience', and the European Union’s Horizon 2020 project SESAME (grant agreement No 101017258).

	\bigskip
	\noindent
	\textbf{Author contributions}\\
	\noindent
	X.Z: Conceptualisation, Data curation, Formal analysis, Funding acquisition, Investigation, Methodology, Project administration, Software, Supervision, Validation, Writing - original draft, Writing - review and editing;
	S.G.: Conceptualisation, Data curation, Funding acquisition, Investigation, Methodology, Project administration, Software, Supervision, Validation, Visualisation, Writing - original draft, Writing - review and editing;
	R.C.: Conceptualisation, Formal analysis, Funding acquisition, Methodology, Project administration, Supervision, Visualisation, Writing - original draft, Writing - review and editing;
	C.I.: Data curation, Investigation,  Validation, Visualisation, Writing - original draft, Writing - review and editing;
	V.R.: Conceptualisation,  Funding acquisition, Methodology, Project administration, Supervision, Writing - review and editing;
	D.F.:  Conceptualisation,  Funding acquisition, Methodology, Project administration, Supervision, Writing - review and editing.
	
	\bigskip
	\noindent
	\textbf{Competing interests}\\
	The authors declare no competing interests.

	\bigskip
	\noindent
	\textbf{Additional information}\\
	\textbf{Supplementary information} The online version contains supplementary material available at~\url{https://arxiv.org/abs/2303.08476}.

	\bigskip
	\noindent
	\textbf{Correspondence} should be addressed to Xingyu Zhao or Simos Gerasimou.

\includepdf[pages={-},pagecommand={\thispagestyle{headings}},scale=1]{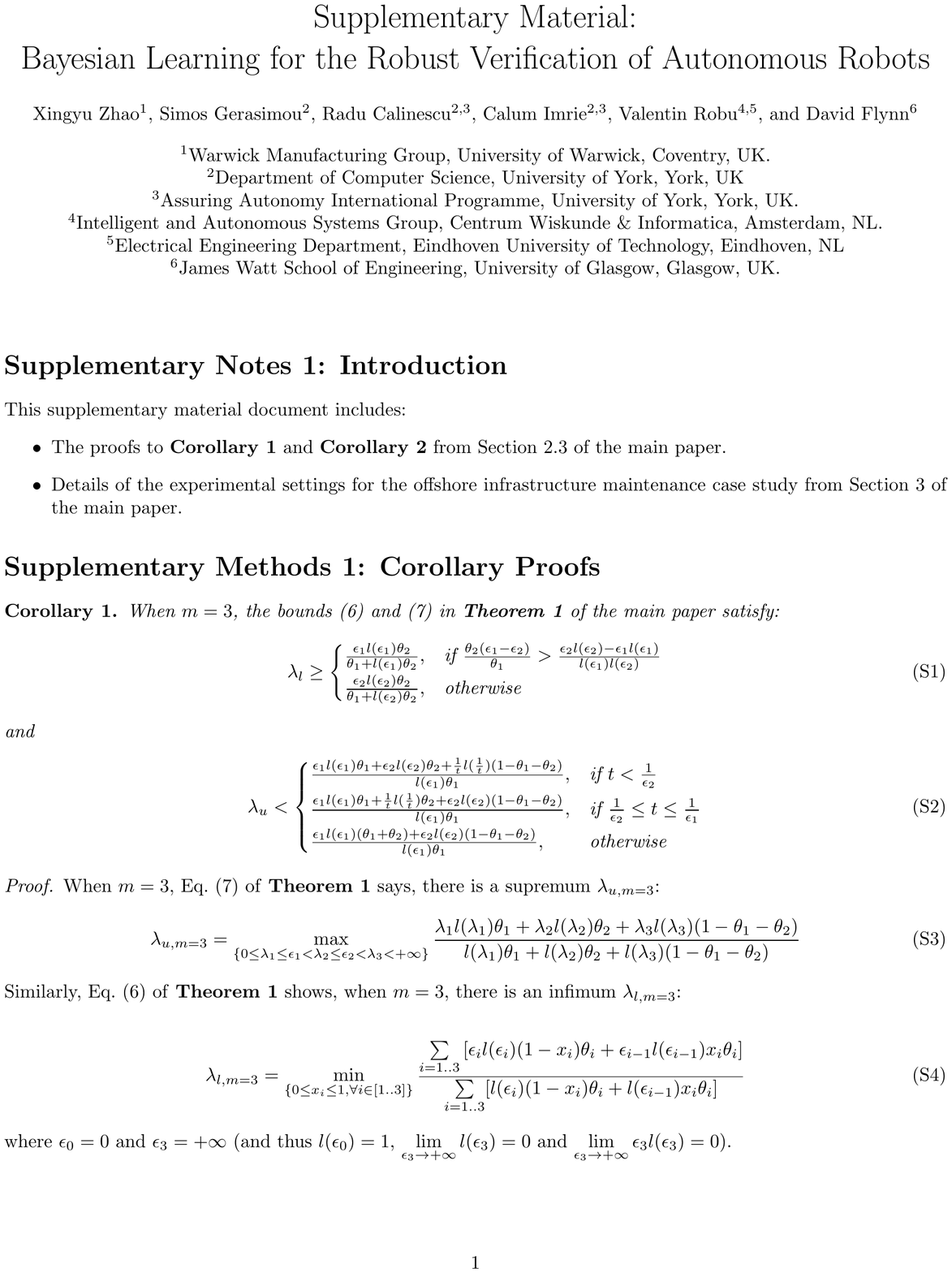}
\end{document}